
\documentclass[10pt,twocolumn,letterpaper]{article}

\usepackage[table,dvipsnames]{xcolor}
\usepackage[pagenumbers]{cvpr} 

\usepackage{graphicx}
\usepackage{amsmath}
\usepackage{amssymb}
\usepackage{booktabs}
%
%


\usepackage{xspace}

\newcommand{\alias}{DexHandDiff\xspace}
\newcommand{\myparagraph}[1]{\vspace{3pt}\noindent\textbf{#1}}

\definecolor{bestcolor}{gray}{.9}
\newcommand{\bestcell}[1]{\cellcolor{bestcolor}{#1}}

\newcommand{\bs}{\boldsymbol{s}}
\newcommand{\ba}{\boldsymbol{a}}

\newcommand{\btau}{\boldsymbol{\tau}}
\newcommand{\by}{\boldsymbol{y}}

\usepackage{changepage}

\usepackage{amsmath}
\usepackage{amssymb}
\usepackage{mathtools}
\usepackage{pifont}
\usepackage{amsthm}
\usepackage{makecell}
\usepackage{multirow}
\usepackage{tcolorbox}          
\usepackage{listings}           
\tcbuselibrary{listings}        
\tcbuselibrary{breakable}       

\newtcblisting{promptcode}[1][Python Prompt Example]{
    listing only,
    listing options={
        breaklines=true,       
        breakatwhitespace=true,
        keepspaces=true,       
        basicstyle=\ttfamily\scriptsize, 
        showstringspaces=false,
        tabsize=4,            
        columns=flexible,      
        escapeinside={/*}{*/},  
    },
    breakable=true,           
    width=\linewidth,         
}

\newtcblisting{pythoncode}[1][Python Prompt Example]{
    listing only,
    listing options={
        language=Python,
        breaklines=true,       
        breakatwhitespace=true,
        keepspaces=true,       
        basicstyle=\ttfamily\scriptsize, 
        showstringspaces=false,
        tabsize=4,            
        columns=flexible,      
        escapeinside={/*}{*/},  
    },
    breakable=true,           
    width=\linewidth,         
}

\theoremstyle{plain}
\newtheorem{theorem}{Theorem}[section]

\theoremstyle{definition}

\theoremstyle{remark}

\usepackage{algorithm}
\usepackage{algpseudocode}
\algrenewcommand\algorithmicrequire{\textbf{Input:}}
\algrenewcommand\algorithmicensure{\textbf{Output:}}

%
\definecolor{cvprblue}{rgb}{0.21,0.49,0.74}
\usepackage[pagebackref,breaklinks,colorlinks,allcolors=cvprblue]{hyperref}

\usepackage[accsupp]{axessibility}  

\hypersetup{linkcolor={red}}  


\title{DexHandDiff: Interaction-aware Diffusion Planning for\\Adaptive Dexterous Manipulation}


\author{
Zhixuan Liang$^{1,2}$\footnotemark[2] \quad
Yao Mu$^1$ \quad 
Yixiao Wang$^2$ \quad
Tianxing Chen$^{1}$ \quad
Wenqi Shao$^1$\\
Wei Zhan$^2$ \quad
Masayoshi Tomizuka$^2$\footnotemark[3] \quad 
Ping Luo$^{1}$\footnotemark[3] \quad
Mingyu Ding$^{2}$\\
[1.5mm]
$^1$The University of Hong Kong \quad
$^2$University of California, Berkeley\\
{\tt\small \{zxliang, ymu, pluo\}@cs.hku.hk \quad \{yixiao\_wang, wzhan, tomizuka, myding\}@berkeley.edu}
\\
\normalsize{\url{https://dexdiffuser.github.io/}}
}

\begin{document}
\maketitle
\footnotetext[2]{This work was done during Zhixuan's visit to UC Berkeley.}
\footnotetext[3]{Corresponding authors.}

\begin{abstract}
Dexterous manipulation with contact-rich interactions is crucial for advanced robotics. 
While recent diffusion-based planning approaches show promise for simple manipulation tasks, they often produce unrealistic ghost states (e.g., the object automatically moves without hand contact) or lack adaptability when handling complex sequential interactions.
In this work, we introduce \alias, an interaction-aware diffusion planning framework for adaptive dexterous manipulation.
\alias models joint state-action dynamics through a dual-phase diffusion process which consists of pre-interaction contact alignment and post-contact goal-directed control, enabling goal-adaptive generalizable dexterous manipulation.
Additionally, we incorporate dynamics model-based dual guidance and leverage large language models for automated guidance function generation, enhancing generalizability for physical interactions and facilitating diverse goal adaptation through language cues.
Experiments on physical interaction tasks such as door opening, pen and block re-orientation, object relocation, and hammer striking demonstrate \alias's effectiveness on goals outside training distributions, achieving over twice the average success rate (59.2\% vs.~29.5\%) compared to existing methods. 
%
Our framework achieves an average of 70.7\% success rate on goal adaptive dexterous tasks, highlighting its robustness and flexibility in contact-rich manipulation.

\end{abstract}    
\section{Introduction}
\label{sec:intro}


\begin{figure}[tb]
  \centering
   \includegraphics[width=1.0\linewidth]{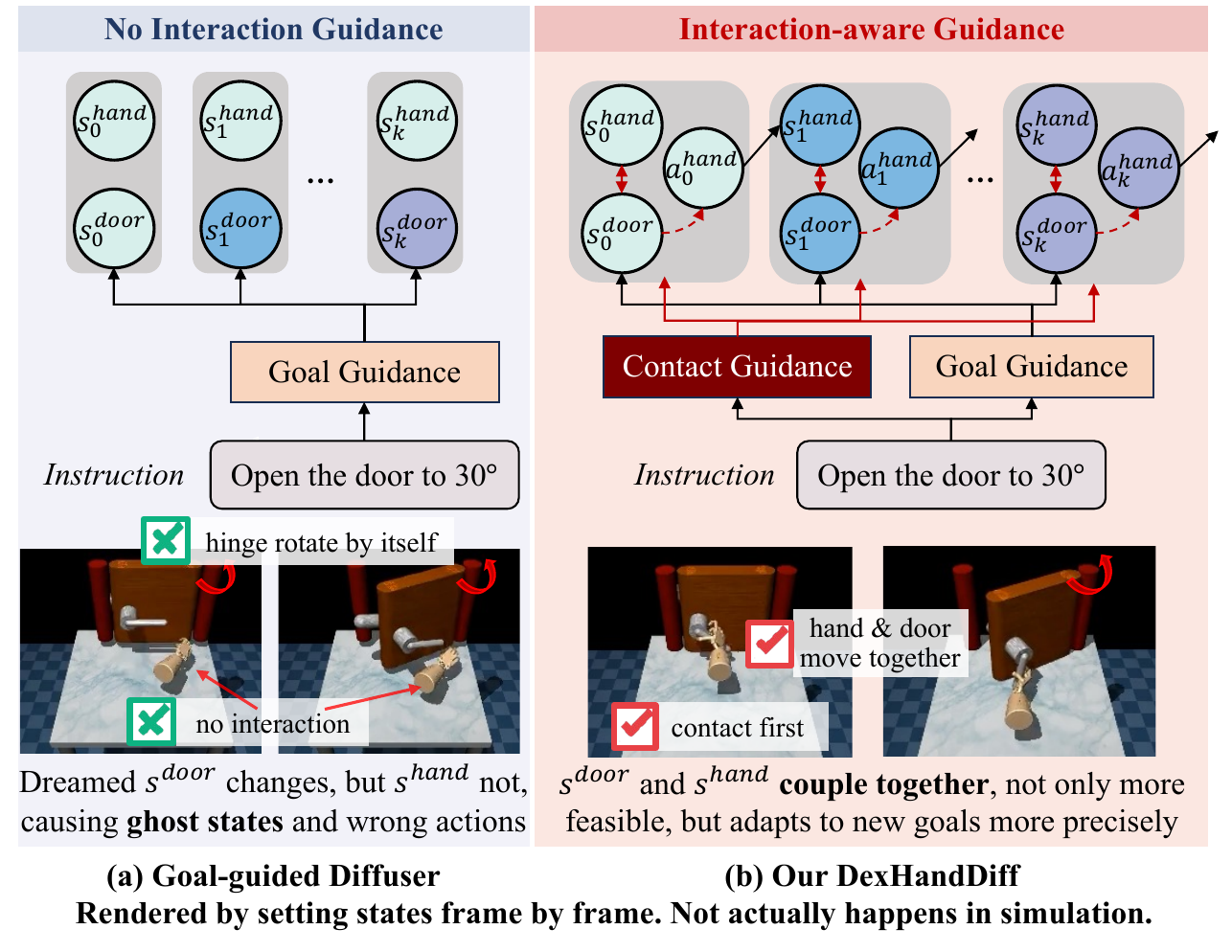}
   \includegraphics[width=1.0\linewidth]{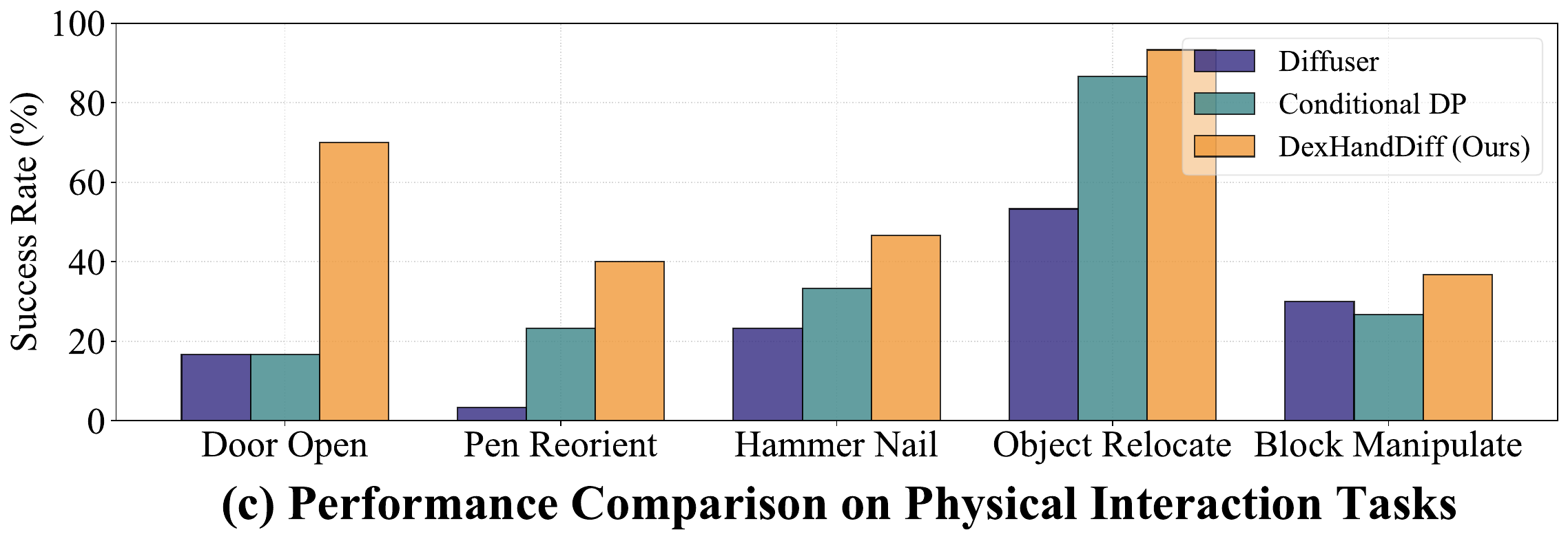}
   \vspace{-20pt}
   \caption{
   (a) Previous diffusers directly apply goal guidance to object states, which causes ghost states, where objects appear to move independently without hand contact-a physically impossible scenario. (b) \alias introduces contact guidance that jointly influences both hand/object states and hand actions, while maintaining tight state-action coupling. It prevents ghost states, and enables precise goal adaptation. (c) Quantitative comparisons with previous methods on goal-shifted interaction tasks.}
   \label{fig:teaser}
   \vspace{-16pt}
\end{figure}

Dexterous manipulation, a cornerstone of advanced robotics with applications from service robotics to industrial automation, remains a challenging problem despite advances in reinforcement learning (RL)~\cite{openai2020learning,akkaya2019solving,chen2022system,wan2023unidexgrasp++,wu2024unidexfpm} and imitation learning~\cite{mandikal2022dexvip,huang2021generalization}.
Recently, diffusion-based planning~\cite{diffuser,decisiondiffuser,adaptdiffuser,diffusion_policy} has emerged as a promising new representative of imitation learning, capable of learning intricate motion trajectories from demonstration data for smoother and more adaptable control.
However, current diffusion approaches are primarily designed for simpler gripper-based (one Degree of Freedom) manipulation tasks, focusing on either trajectory completion or action replay by reaching target positions sequentially.
They fall short in dexterous hand manipulation requiring part-aware precise interaction and exhibiting rich contact dynamics through multi-finger control and in-hand adjustment.

More specifically, 
%
existing diffusion on action models~\cite{diffusion_policy,ACT} (\ie models generating actions) excel in well-defined tasks but often lack generalizability in adapting to complex or new tasks with flexible interaction requirements. They necessitate continual data collection for new goal configurations even within the same dynamics, limiting their effectiveness in contact-rich interactions.
In contrast, diffusion on state methods~\cite{diffuser,decisiondiffuser,metadiffuser}, including those adapted from video diffusion models for imitation learning~\cite{unipi,GR-2}, will produce unrealistic ``ghost states'' in interaction tasks.
As shown in Fig.~\ref{fig:teaser} and Fig.~\ref{fig:ghost_state}, the visualizations are rendered by setting states frame by frame with predicted output from state-based methods, and show objects react independently of physical contact (\eg drawers opening on their own before the manipulator reaches them), which cannot actually happen and would result in failure.
This issue arises because the object states can't be directly controlled. Actions must first influence dexterous hand's states before impacting the object, revealing the importance of modeling state transitions for physics-driven interactions.
%

Thus, we propose~\alias, an interaction-aware diffusion model tailored for adaptive dexterous manipulation that exhibits goal shifts or cost function variations while maintaining similar dynamics.
\alias models joint state-action dynamics that takes the state output to guide and constrain the action output with realistic physical behavior.
A dynamics model-based dual guide is incorporated to maintain coherence with dynamics observed in training data. It addresses the action-state consistency challenge first identified in Diffuser~\cite{diffuser} which however prioritized generated state over action, as shown in Fig.~\ref{fig:teaser}.

Specifically, \alias adopts a goal-adaptive diffusion mechanism with dual-phase process. 
1) At first, pre-contact phase, it guides the manipulator to align with the object’s key contact point, such as a handle or the center of object, ensuring stable alignment before initiating physical interaction. 
2) In the subsequent post-contact phase, it introduces joint guidance over both the manipulator and the object states, enabling fine-grained control to achieve the target state for the object.
This sequential approach integrates both action diffusion that prevents premature influence on the object's state before contact, and state diffusion that ensures effective goal alignment throughout.
By generating states and actions in an interaction-aware manner, \alias produces more coherent and realistic trajectories suited to complex tasks like tool using. 
Furthermore, to automate guidance function design, \alias introduces an approach using large language models in the text-to-reward paradigm, that can generalize across diverse goals and cost functions via language cues.

We conduct experiments on multiple dexterous manipulation tasks to evaluate \alias's effectiveness, 
covering both in-domain and goal-adaptability challenges, \eg, adapting to new goal ``door closing'' from ``90-degree door opening'' training data.
%
%
Results with up to 70.0\% success rate on the 30-degree door task (vs.~the next best 16.7\% for Diffusion Policy) and 46.7\% on the hammer nail half-drive task (vs.~the next best 33.3\% for Decision Diffuser), confirm \alias's robustness and adaptability in capturing complex hand-object-environment interactions.

In summary, \alias advances adaptive dexterous manipulation by:
1) We propose the first interaction-aware, goal-adaptive diffusion planner for dexterous manipulation, modeling manipulator-object-environment dependencies to handle sequential tasks with complex state transitions.
2) By jointly modeling state-action behaviors with dynamics-based dual guidance and LLM-based interaction guidance, \alias sets a new standard for adaptive planning in dexterous manipulation and for the first time extends text-to-reward concepts to diffusers.
%
3) Experimental validation on diverse dexterous manipulation tasks, demonstrating its robustness and adaptability. \alias achieves over twice the average success rate of the next best method (59.2\% vs.~29.5\%) across goal-directed tasks. 





\section{Related Works}
\label{sec:related_work}

\myparagraph{Dexterous Manipulation.}
Dexterous manipulation~\cite{rajeswaran2017learning,gupta2021reset,sivakumar2022robotic,qin2022dexmv,chen2022dextransfer,chen2024vividex,luo2024grasping,he2024omnih2o,wang2024dexcap,weng2024dexdiffuser}
with multi-fingered hands enables complex tasks in unstructured environments by mimicking human hand flexibility. Initially, traditional methods using trajectory optimization and precise dynamics models~\cite{adroit,nagabandi2020deep}, struggled with high-dimensional action spaces and contact-rich dynamics. 
This led to the adoption of reinforcement learning (RL)~\cite{adroit,chen2022towards,RLdexterousReview,wu2024unidexfpm} for handling complex, high-DOF (degree of freedom) interactions. However, RL requires extensive online exploration and carefully designed reward functions~\cite{nagabandi2020deep,bidexterous} where inadequate reward shaping can hardly learn and it limits adaptability~\cite{yu2022dexterous,zhou2024learning}. Demonstration-based methods~\cite{zhou2024learning} reduce sample complexity, but they struggle to generalize across sequential, contact-rich tasks. Our \alias addresses these challenges by explicitly modeling hand-object-environment interactions, enabling goal-adaptive planning without intricate reward shaping, thus allowing for more efficient learning in complex, sequential tasks.

\myparagraph{Diffusion-based Planning Methods.}
Planning with diffusion models has become prominent in imitation learning for robotic manipulation~\cite{diffuser,diffusion_policy,metadiffuser,adaptdiffuser,skilldiffuser,chen2024g3flow,decisiondiffuser,liang2024dexdiffuser}. 
Classifier-guided methods~\cite{diffuser,adaptdiffuser} used task-specific classifiers to condition policies, while classifier-free ones integrated task variations within diffusion model~\cite{decisiondiffuser}. However, classifier-free methods lack flexibility for zero-shot explicit conditioning tasks due to reliance on training data configurations.
%
%
\alias addresses this by performing classifier-guided diffusion over both state and action spaces, enabling  precise interaction and rich-contact dynamics planning for more realistic, complex and adaptable manipulation.

\myparagraph{LLM-based Robot Policy Code Generation.}
Recent works~\cite{liang2023code,robocodex,chen2024roboscript,wu2024plot2code} have demonstrated the potential of LLMs in generating executable code for robotics tasks. Code as Policies~\cite{liang2023code} showed LLMs can effectively translate high-level task descriptions into functional robot control programs. Eureka~\cite{eureka} and Text2Reward~\cite{text2reward} further advanced this direction by generating crucial parameters or complete reward functions from language descriptions, demonstrating well-structured prompts with comprehensive environment information can enable reliable reward shaping.
Our work extends this text-to-code paradigm to imitation learning through diffusers. \alias provides a natural interface for LLM code generation through its guidance function formulation, bridging the gap between task specification and behavioral policies to learn.


\section{Preliminary}
\subsection{Diffusion Model as Policy}
We formulate the dexterous manipulation planning problem within the Markov Decision Process (MDP) framework~\cite{puterman1994markov}, defined as $\mathcal{M}=(\mathcal{S}, \mathcal{A}, \mathcal{T}, \mathcal{R}, \gamma)$. The objective is to find an optimal action sequence $\ba_{0:T}^*$ that satisfies:
\vspace{-8pt}
\begin{equation}
\ba_{0:T}^*=\underset{\ba_{0:T}}{\arg \max } \mathcal{J}(\bs_0, \ba_{0:T})=\underset{\ba_{0:T}}{\arg \max } \sum_{t=0}^T \gamma^t R(\bs_t, \ba_t),
\vspace{-11pt}
\end{equation}
where state transitions follow $\bs_{t+1}=\mathcal{T}(\bs_t, \ba_t)$.

Following~\cite{diffuser,decisiondiffuser}, we leverage diffusion models to address this planning problem by treating state and action trajectories $\btau$ as sequential data. The reverse process of diffusion learns to denoise trajectories from a standard normal distribution through conditional probability $p_\theta(\btau^{i-1} \mid \btau^i)$. The model is trained to maximize the likelihood:
\vspace{-6pt}
\begin{equation}
\small
p_\theta\left(\btau^0\right)=\int p\left(\btau^N\right) \prod_{i=1}^N p_\theta\left(\btau^{i-1} \mid \btau^i\right) \mathrm{d} \btau^{1: N},
\vspace{-6pt}
\end{equation}
with the optimization objective inspired by ELBO,
\vspace{-4pt}
\begin{equation}
\theta^*=\arg \min _\theta-\mathbb{E}_{\btau^0}\left[\log p_\theta\left(\btau^0\right)\right].
\label{eq:theta_optim}
\vspace{-6pt}
\end{equation}

For practical implementation, we adopt the simplified surrogate loss~\cite{ddpm} that focuses on predicting the noise term:
\vspace{-8pt}
\begin{equation}
\mathcal{L}_{\text{denoise}}(\theta) = \mathbb{E}_{i, \btau^{0} \sim q, \epsilon \sim \mathcal{N}}[||\epsilon - \epsilon_{\theta}(\btau^i, i)||^{2}].
\label{eq:loss_uncond}
\end{equation}

\subsection{Classifier-free Conditional Diffusion Policy}
To generate high-reward trajectories, classifier-free guidance~\cite{diffusion_beatgan} has been transferred from image to trajectory generation~\cite{decisiondiffuser}. This approach incorporates guidance signals $\boldsymbol{y}(\btau)$ directly in the noise prediction model by:
\vspace{-5pt}
\begin{equation}
\hat{\epsilon} = \epsilon_\theta(\btau^i, \varnothing, i) + \omega(\epsilon_\theta(\btau^i, \boldsymbol{y}, i) - \epsilon_\theta(\btau^i, \varnothing, i)),
\label{eq:hat_epsilon}
\vspace{-5pt}
\end{equation}
where $\omega$ controls the guidance strength, and $\varnothing$ denotes the absence of conditioning. 
During sampling, trajectories are generated with the predicted modified noise $\hat{\epsilon}$.

\subsection{Classifier-guided Diffusion Policy}
\label{sec:c-guided formulation}
Different from classifier-free diffusion models that condition relying solely on implicit representations within the training data,
classifier-guided approach, enables direct reward or goal conditioning through gradient-based guidance.


For reward maximization, it introduces trajectory optimality $\mathcal{O}_{t}$ at timestep $t$, following a Bernoulli distribution where ${p(\mathcal{O}_t=1) = \exp (\gamma^{t} \mathcal{R}(\bs_t, \ba_t))}$. The diffusion process can be naturally extended to incorporate conditioning by sampling from perturbed distributions: 
\vspace{-7pt}
\begin{equation}
\label{eq:perturbed}
\tilde{p}_\theta(\btau{}) = p(\btau{} \mid \mathcal{O}_{1:T}=1) \propto p_{\theta}(\btau{}) p(\mathcal{O}_{1:T}=1 \mid \btau{})
\vspace{-6pt}
\end{equation}

Under Lipschitz conditions on $p(\mathcal{O}_{1:T} \mid \btau^{i})$~\cite{feller2015theory}, the reverse diffusion process follows:
\vspace{-6pt}
\begin{equation}
\label{eq:guided}
p_\theta(\btau^{i-1} \mid \btau^{i}, \mathcal{O}_{1:T}) \approx \mathcal{N}(\btau^{i-1}; \mu_{\theta} + \alpha\Sigma g, \Sigma),
\vspace{-6pt}
\end{equation}
where the guidance gradient $g$ is:
\vspace{-6pt}
\begin{equation}
\begin{aligned}
g &= \nabla_{\btau} \log p(\mathcal{O}_{1:T} \mid \btau) |_{\btau = \mu_{\theta}} \\
\vspace{-2pt}
&= \sum_{t=0}^{T} \gamma^{t} \nabla_{\bs_t,\ba_t} \mathcal{R}(\bs_t, \ba_t) |_{(\bs_t,\ba_t)=\mu_t}
= \nabla_{\btau} \mathcal{J}(\mu_{\theta}).
\end{aligned}
\vspace{-8pt}
\end{equation}

For discrete goal conditioned tasks, the constraint can be simplified by directly substituting conditional values at each diffusion timestep $i \in \{0, 1, ..., N\}$.

\begin{table*}[tb]
  \centering
  \small
  \setlength{\tabcolsep}{10pt}
  \resizebox{0.92\linewidth}{!}{
  \begin{tabular}{l|ccc|ccc}
    \toprule
    \textbf{Method} & \makecell{Diffusion on\\State or Action}& \makecell{Diffusion\\Condition Type}& \makecell{Action Gen\\Method} & \makecell{Goal\\Adaptability} & \makecell{No Ghost\\States} & \makecell{Interaction\\Aware} \\
    \midrule
    \textbf{Diffuser}~\cite{diffuser} & State & Classifier-Guided & Inverse Dyn & \checkmark & $\times$ & $\times$ \\
    \textbf{Decision Diffuser}~\cite{decisiondiffuser} & State & Classifier-Free & Inverse Dyn & $\times$ (if diverse data, then \checkmark) & $\times$ & $\times$ \\
    \textbf{Diffusion Policy}~\cite{diffusion_policy} & Action & Classifier-Free & Direct & $\times$ (if diverse data, then \checkmark) & \checkmark & $\times$ \\
    \midrule
    \textbf{\alias (Ours)} & State \& Action & Classifier-Guided & Direct & \checkmark & \checkmark & \checkmark \\
    \bottomrule
  \end{tabular}}
  \vspace{-6pt}
  \caption{\textbf{Comparison of diffusion-based approaches for robot manipulation.} Quantitative results on door-opening are shown in Sec.~\ref{sec:experiment}.}
  \vspace{-12pt}
  \label{tab:analysis}
\end{table*}

\section{Analysis of Diffusion-based Planning Methods for Interaction-intensive Tasks}
\label{sec:challenge_of_existing}

Current diffusion-based methods are widely adopted for robotic manipulation but reveal significant limitations when applied to dexterous, sequential interaction tasks. Table~\ref{tab:analysis} provides an overview of prominent diffusion-based methods (including Diffuser~\cite{diffuser}, Decision Diffuser~\cite{decisiondiffuser}, Diffusion Policy~\cite{diffusion_policy} and our~\alias), categorizing each by their conditioning approach, action generation method, and goal adaptability.
In this section, we analyze these challenges across three key dimensions.


\myparagraph{Action-only Diffusion is Limited in Explicit State Conditioning.}
Existing diffusion on action models like Diffusion Policy (DP)~\cite{diffusion_policy}, excel in providing precise, consistent action control, benefiting from extensive training data and bypassing errors from inverse kinematics.
They yield high performance when training data is sufficient and diverse. However, for tasks requiring variant multi-stage goals, action-only diffusion lacks the flexibility to perform explicit state guidance at intermediate stages, like aligning hand and object state at pre-grasp stage, hurting the adaptability of the whole planner. 
For example, DP trained on data with opening the door to 90 degrees hardly adapt to open 30 or 60 degrees.

\myparagraph{Ghost States in State-only Diffusion for Sequential Interaction.}
While state-based diffusion models offer the advantage of flexible goal specification, it is only effective in fully actuated tasks where all degrees of freedom (DoF) are directly controllable, such as MuJoCo~\cite{mujoco,diffuser}, and gripper pick-and-place (requiring only end-effector position control)~\cite{decisiondiffuser,diffusion_policy} tasks. In such scenarios, all states of the system can be manipulated directly.
However, in contact-rich interaction task where indirect control exists, such as striking a nail with a hammer using a dexterous hand, additional uncontrollable DoFs, like the hammer head and nail positions, must be changed through transitions from the states of the hand. Applying generation across all states, including those of objects beyond the hand, will result in unrealistic ``ghost states'' where objects appear to move independently of contact but actually cannot, as illustrated in Fig.~\ref{fig:teaser} and Fig.~\ref{fig:ghost_state}. 

\myparagraph{Classifier-free vs.~Classifier-guided Adaptability.}
Classifier free diffusion models, valued for not requiring external classifiers, encode task variations directly within the model. This structure is effective for tasks with constrains in observed configurations, but with limited goal adaptability in zero-shot or new-task scenarios.
For instance, in the push-T task, DP cannot directly adapt to new target positions due to the fixed goal in training data. 
In contrast, classifier-guided methods, such as ours, mitigate this limitation by offering adaptable, gradient-based guidance, enabling direct conditioning on new goals or rewards, enhancing flexibility across a range of tasks.

\section{Method}
\label{sec:method}
\subsection{Interaction-aware Diffusion-based Planning}
To address these limitations, we propose~\alias, an interaction-aware diffusion planning framework (Fig.~\ref{fig:framework}), maintaining physical consistency and enabling flexible goal adaptation for dexterous manipulation.


\begin{figure}[tb]
  \centering
   \includegraphics[clip, trim=0 2.5cm 0 7.5cm, width=\linewidth]{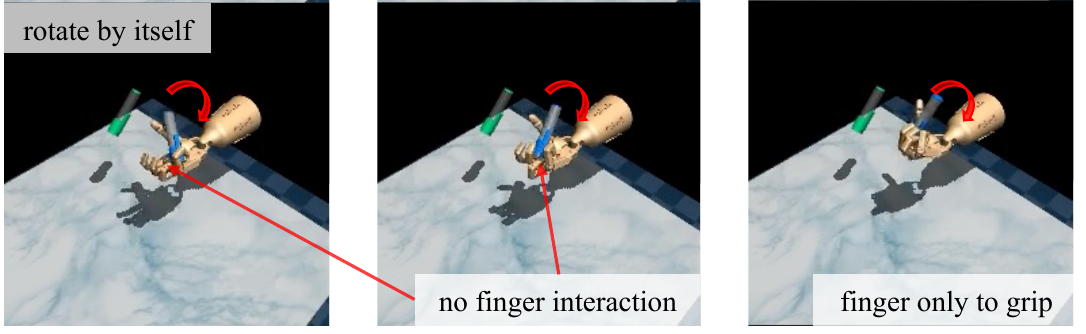}
   \vspace{-20pt}
   \caption{
   \textbf{Demonstration of ghost states on the pen reorientation task.} The visualizations are rendered by setting predicted states frame by frame, which cannot actually happen and will lead to failure.
The pen appears to autonomously rotate to the desired pose without any hand manipulation, and the fingers look like moving to grasp the pen at the last frame.
   }
   \label{fig:ghost_state}
   \vspace{-15pt}
\end{figure}

\begin{figure*}[tb]
  \centering
   \includegraphics[width=0.95\linewidth]{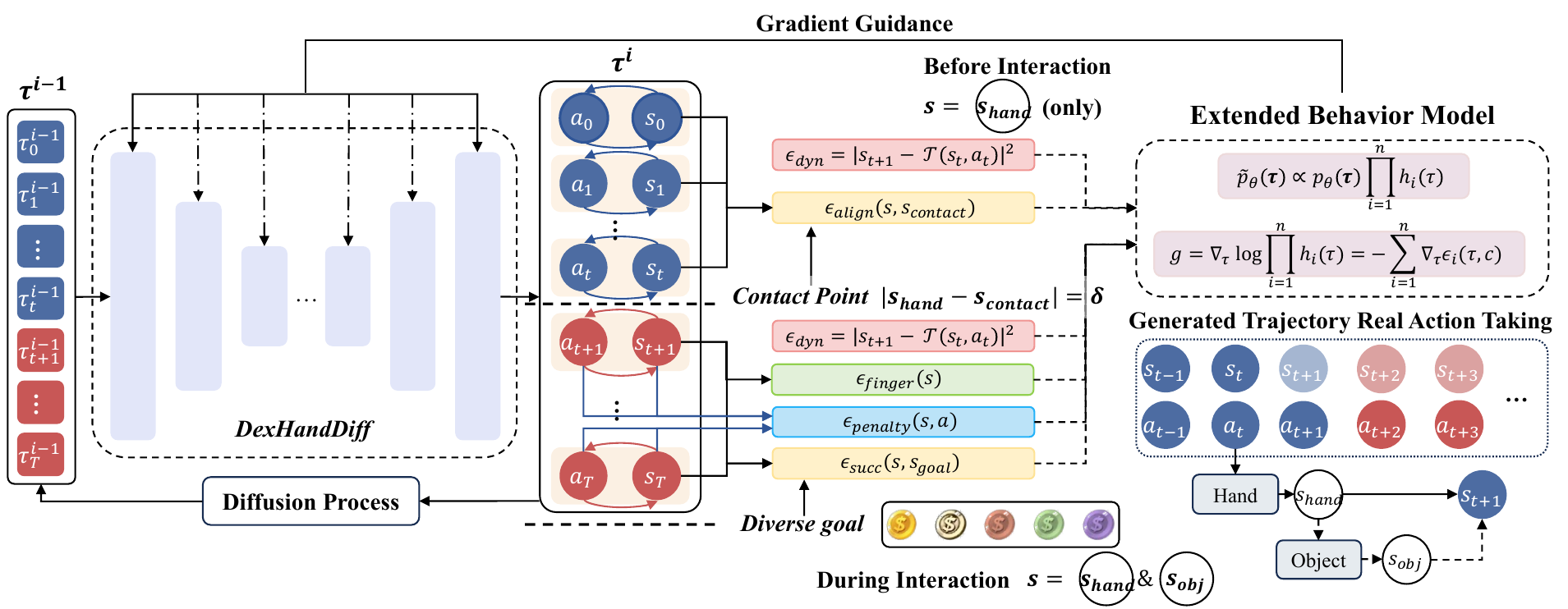}
   \vspace{-10pt}
      \caption{\textbf{Framework of \alias.} \alias employs joint state-action diffusion with interaction-aware guidance. Before interaction (top middle), guidance aligns the hand to the object contact point. Upon contact (bottom middle), additional guidance steers both hand and object states towards the goal (``\&'' means state concatenation at input level), enforcing physical constraints and avoiding ghost states. A learned dynamics model further ensures consistency between states and actions. Our~\alias utilizes extended behavior model to aggregate multiple condition terms to guide the diffusion process.
      }
   \label{fig:framework}
   \vspace{-15pt}
\end{figure*}

\myparagraph{Joint State-Action Diffusion Model.}
Our approach builds upon classifier-guided diffusion policies. But we jointly diffuse over the concatenated state-action space $\btau = [(\ba_0, \bs_0), (\ba_1, \bs_1), ..., (\ba_T, \bs_T)]$, where state $\bs$ includes both hand (24 joint angles and 3 position offsets) and task-specific object states (\ie door hinge angle, pen pose \etc), and action $\ba$ represents changes in controllable states (only hand joints and positions).

This design choice directly addresses the above mentioned limitations: (1) By including states in the diffusion process, we enable explicit state conditioning and goal specification, overcoming the limitations of action-only approaches; (2) By classifier-guided diffusion, we allow flexible goal adaptation without exhaustive training data; (3) By jointly modeling states and actions, we maintain their physical coupling and prevent ghost states through carefully designed guidance. With denoised states guiding the generated actions, we effectively balance the state conditioning and action precision.

\myparagraph{Extended Behavior Model and Energy Function.}
According to Eq.~\ref{eq:perturbed}, the standard conditional diffusion follows:
\vspace{-8pt}
\begin{equation}
\tilde{p}_\theta(\btau{}) \propto p_{\theta}(\btau{}) p(\mathcal{O}_{1:T}=1 \mid \btau{}) \propto p_{\theta}(\btau{})h(\btau{}),
\end{equation}
where we generalize $p(\mathcal{O}_{1:T}=1 \mid \btau{})$ as a behavior model $h(\btau{})$. Then we further generalize this formulation through a product of experts framework~\cite{product_of_expert}, where each expert represents a specific behavior model:
\vspace{-8pt}
{\small
\begin{equation}
\tilde{p}_\theta(\btau{}) \propto p_\theta(\btau{}) \prod_{i=1}^n h_i(\btau{}).
\label{eq:product_experts}
\vspace{-6pt}
\end{equation}}

From the energy function perspective, each behavior model encoding task-specific objectives or constraints is:
\vspace{-7pt}
{\small
\begin{equation}
h_i(\btau{}, c) = \frac{1}{\int e^{-\varepsilon_i(\btau{},c)}d\btau{}} e^{-\varepsilon_i(\btau{},c)},
\label{eq:energy_guide}
\vspace{-6pt}
\end{equation}}
where $\varepsilon_i(\btau{},c)$ represents the energy function for the $i$-th guidance objective, with $c$ denoting task-specific conditions. This formulation allows combining multiple objectives (\eg, reaching the target state while maintaining physical consistency) via their respective guidance functions.

Under appropriate smoothness conditions, the guidance gradient $g$ in the reverse diffusion process (Eq.~\ref{eq:guided}) can be decomposed as the sum of individual guidance gradients:
\vspace{-6pt}
{\small
\begin{equation*}
g = \nabla_{\btau{}} \log \prod_{i=1}^n h_i(\btau{})= \sum_{i=1}^n \nabla_{\btau{}} \log h_i(\btau{}) = -\sum_{i=1}^n \nabla_{\btau{}} \varepsilon_i(\btau{},c).
\label{eq:combined_gradient}
\vspace{-5pt}
\end{equation*}}

This enables integration of multiple guidance signals, each addressing different aspects of the interaction task, while maintaining a coherent optimization objective.

\myparagraph{Dynamics-aware Generation.}
A key challenge in joint state-action diffusion is maintaining consistency between generated states and actions~\cite{diffuser}. Our method addresses this through a learned dynamics model trained on demonstration data, constraining state-action generation via additional loss
in diffusion training and serving as a guide in inference.
By penalizing state-action pairs that violate observed dynamics, this guidance ensures our model maintains both state conditioning benefits and action feasibility. 
{\small
\vspace{-8pt}
\begin{equation}
\varepsilon_{\text{dyn}}(\btau) = |\bs_{t+1} - \mathcal{T}(\bs_t, \ba_t)|^2,
\label{eq:dynamics_energy}
\vspace{-2pt}
\end{equation}
}
where $\mathcal{T}(\bs,\ba)$ is a separately trained dynamics model to ensure physically plausible motion patterns.

\myparagraph{Manipulation after Contact Task Guidance.}
For manipulation after contact tasks such as door opening and tool using, \alias employs a dual-phase interaction approach that acknowledges the fundamentally different nature of interaction before and after contact establishment. The framework automatically determines the phase transition based on the distance between the palm position and the designated contact point on the object, applying a smooth transition mask to blend between phases.

In the pre-grasp phase, our method focuses on guiding the manipulator to stably align with the contact point while preventing premature object movement. We engineer two primary guidance components:
1) Alignment guidance $\epsilon_\text{align}$ that directs the end-effector towards precise contact point while maintaining natural approaching trajectory;
2) Dynamics consistency guidance $\epsilon_\text{dyn}$.

Upon establishing contact (determined by palm-object proximity), the post-grasp phase activates additional guidance mechanisms:
1) Goal-directed guidance $\epsilon_\text{succ}$ that steers the coupled hand-object system towards target configurations;
2) Physical constraint guidance $\epsilon_\text{penalty}$ that prevents unrealistic state changes (\eg, limiting per-step changes in both door hinge and latch angles);
3) Continued dynamics guidance $\epsilon_\text{dyn}$ to maintain motion feasibility.

Therefore, the guidance energy function follows,
\vspace{-6pt}
{\small
\begin{equation}
\epsilon = \begin{cases}
\epsilon_{\text{pre}} = \epsilon_{\text{align}} + \epsilon_{\text{dyn}} & \text{if } |\boldsymbol{s}_{\text{hand}} - \boldsymbol{s}_{\text{contact}}| > \delta_1 \\
\epsilon_{\text{post}} = \epsilon_{\text{succ}} + \epsilon_{\text{dyn}} + \epsilon_{\text{penalty}} & \text{otherwise}
\vspace{-10pt}
\end{cases}
\vspace{-1pt}
\end{equation}}
where $\boldsymbol{s}_{\text{hand}}$ and $\boldsymbol{s}_{\text{contact}}$ represents the states of dexterous hand and object contact point (\eg door latch, hammer handle \etc) respectively, and $\delta_1$ is a small threshold.
The separated design of grasp proposal guidance ($\epsilon_{\text{align}}$) and task achieving guidance ($\epsilon_{\text{succ}}$) mirrors successful policies in prior work~\cite{wu2024unidexfpm,wan2023unidexgrasp++}, effective for dexterous manipulation.
Besides, the $\epsilon_{\text{penalty}}$ ensures continuous object state transitions, corresponding to 
\vspace{-6pt}
\begin{equation}
    h_{penalty}\triangleq1-H(|s_{obj}^{t+1}-s_{obj}^{t}|-\delta_2),
    \vspace{-4pt}
\end{equation}
where $\delta_2$ is another small threshold and $H(\cdot)$ is the Heaviside step function~\cite{weisstein2002heaviside}. Then $\epsilon_{\text{penalty}}$ can be obtained by applying Eq.~\ref{eq:energy_guide}, becoming a Dirac delta function that directly sets value when satisfying the constraints.

\myparagraph{In-hand Manipulation Task Guidance.}
For tasks primarily involving in-hand manipulation (e.g., pen spinning, object reorientation), where objects are typically already in hand or quickly transition to in-hand states, we employ a simplified single-phase guidance structure:
1) Goal state guidance $\epsilon_{\text{succ}}$ for achieving target object configurations;
2) Active finger motion guidance to ensure realistic object manipulation;
3) Dynamics consistency guidance $\epsilon_{\text{dyn}}$ to maintain physical plausibility;
4) Physical constraint guidance $\epsilon_{\text{penalty}}$ that prevents unrealistic state changes.
{\small
\vspace{-5pt}
\begin{equation}
\epsilon = \epsilon_{\text{goal}} + \epsilon_{\text{finger}} + \epsilon_{\text{dyn}} + \epsilon_{\text{penalty}}.
\vspace{-4pt}
\end{equation}
}

Specially, we define the behavior model that encourages active finger involvement as,
{\small
\vspace{-4pt}
\begin{equation}
    h_{\text{finger}}(\btau, t) = H(|\bs^{t+1}_{\text{finger-joints}} - \bs^{t}_{\text{finger-joints}}| - \delta_3),
    \vspace{-4pt}
\end{equation}
}
where $\bs^{t}_{\text{finger-joints}}$ is the state vector of all finger joints at planning step $t$. $\delta_3$ is the third small threshold. $H(\cdot)$ is also the Heaviside step function.
This specialized handling prevents unrealistic ``ghost states", as discussed in Sec.~\ref{sec:challenge_of_existing}.

\subsection{LLM-Based Guidance Generation}
\label{sec:llm_gen}

The design of task-specific guidance functions for diffusion policies traditionally requires significant manual effort, particularly for diverse dexterous manipulation tasks. To address this challenge, we leverage a \textbf{two-stage} Large Language Model (LLM) process for automated guidance generation, adopting text-to-reward paradigm~\cite{eureka,text2reward}. 

\myparagraph{Overall Pipeline.} First, we feed the LLM with a 6-part template (including function purpose, guidance structure, environment description, function prototype, task instruction and few-shot hints) and public documents on simulation environments~\cite{adroit,shadow-hand-env} to generate task-specific prompts. Then, the generated task prompts are queried to another LLM to write guidance function code. Only few-shot hints require specific refinement, reducing human trial-and-error times from about 20 (for hand-craft energy function design) to around 5 while maintaining \alias performance.

\myparagraph{Environment Description.}
Our approach employs a comprehensive \textit{Pythonic} environment abstraction that captures the complete interaction system. It encapsulates detailed robot joint configurations, and object-environment specifications from public documents, enabling LLM to generate precise guidance functions that account for the full complexity of dexterous manipulation tasks. 

\myparagraph{Other Details.}
As previous works~\cite{text2reward}, once the guidance function code is generated, we execute the code in interpreter. This step may give us valuable feedback, \eg, syntax errors and runtime errors. We utilize the feedback from code execution as a tool for ongoing refinement within the LLM. Besides, our approach uses few-shot hints instead of examples to allow the model to access relevant functions and best practices without direct examples. Each guidance component is normalized over the trajectory horizon to ensure balanced contributions across objectives while preserving their temporal structure.
%
Detailed examples of prompts and generated guidance functions are shown in Appx.~\ref{appendix:prompts}.

\begin{table*}[tbh]
  \centering
  \small
  \resizebox{0.99\linewidth}{!}{
  \begin{tabular}{c|c|ccc|c|cc|c}
    \toprule
    \textbf{Method} & \textbf{Condition} & \textbf{Open 30$^\circ$}& \textbf{Open 50$^\circ$}& \textbf{Open 70$^\circ$} & \textbf{Open 90$^\circ$} & \textbf{Open 110$^\circ$} & \textbf{Close Door} & \textbf{Average} \\
    \midrule
    \textbf{Diffuser}~\cite{diffuser} & Goal Inpainting & $16.7$ \scriptsize{\raisebox{1pt}{$\pm 4.7$}} & $16.7$ \scriptsize{\raisebox{1pt}{$\pm 12.5$}} & $6.7$ \scriptsize{\raisebox{1pt}{$\pm 4.7$}} & $56.7$ \scriptsize{\raisebox{1pt}{$\pm 9.4$}} & $10.0$ \scriptsize{\raisebox{1pt}{$\pm 8.2$}} & $0$ & $17.8$\\

    \textbf{Diffuser}~\cite{diffuser} & Guided Sampling & $10.0$ \scriptsize{\raisebox{1pt}{$\pm 8.2$}} & $26.7$ \scriptsize{\raisebox{1pt}{$\pm 17.0$}} & $10.0$ \scriptsize{\raisebox{1pt}{$\pm 4.7$}} & $63.3$ \scriptsize{\raisebox{1pt}{$\pm 18.7$}} & $6.7$ \scriptsize{\raisebox{1pt}{$\pm 9.4$}} & \textbf{60.0} \scriptsize{\raisebox{1pt}{$\pm 8.2$}} & $29.5$ \\
    \midrule
    
    \textbf{Decision Diffuser}~\cite{decisiondiffuser} & Embedding & $0$ & $3.3$ \scriptsize{\raisebox{1pt}{$\pm 4.7$}} & $16.7$ \scriptsize{\raisebox{1pt}{$\pm 4.7$}} & \textbf{100} \scriptsize{\raisebox{1pt}{$\pm 0$}} \hspace{3pt} & \textbf{30.0} \scriptsize{\raisebox{1pt}{$\pm 8.2$}} & $0$ & $25.0$\\
    \midrule
    
    \textbf{Diffusion Policy}~\cite{diffusion_policy} & Embedding & $16.7$ \scriptsize{\raisebox{1pt}{$\pm 4.7$}} & $3.3$ \scriptsize{\raisebox{1pt}{$\pm 4.7$}} & $13.3$ \scriptsize{\raisebox{1pt}{$\pm 12.5$}}  & \textbf{100} \scriptsize{\raisebox{1pt}{$\pm 0$}} \hspace{3pt} & $3.3$ \scriptsize{\raisebox{1pt}{$\pm 4.7$}} & $0$ & $22.8$ \\
    \midrule

    
    \textbf{\alias-disc.} & Goal Inpainting & $46.7$ \scriptsize{\raisebox{1pt}{$\pm 4.7$}} & $13.3$ \scriptsize{\raisebox{1pt}{$\pm 9.4$}} & \textbf{53.3} \scriptsize{\raisebox{1pt}{$\pm 4.7$}} & $20.0$ \scriptsize{\raisebox{1pt}{$\pm 8.2$}} & $6.7$ \scriptsize{\raisebox{1pt}{$\pm 4.7$}} & $0$ & $23.3$\\
    
    \bestcell{\textbf{\alias (Ours)}} & \bestcell{Guided Sampling} & \bestcell{\textbf{70.0} \scriptsize{\raisebox{1pt}{$\pm 8.2$}}} & \bestcell{\textbf{56.7} \scriptsize{\raisebox{1pt}{$\pm 4.7$}}} & \bestcell{\textbf{53.3} \scriptsize{\raisebox{1pt}{$\pm 8.2$}}} & 
    \bestcell{$90.0$ \scriptsize{\raisebox{1pt}{$\pm 8.2$}}} & 
    \bestcell{\textbf{26.7} \scriptsize{\raisebox{1pt}{$\pm 14.1$}}} & \bestcell{\textbf{58.3} \scriptsize{\raisebox{1pt}{$\pm 13.4$}}} & \bestcell{\textbf{59.2}}\\
    \bottomrule
  \end{tabular}}
  \vspace{-6pt}
  \caption{\textbf{Success rates (in \%) of different diffusion-based approaches in Adroit Hand~\cite{adroit} environment.} All models were trained on the Open 90° task only, and we test their adaptability to other task goals in Adroit Door environment. All results and standard deviation are calculated over 3 tries for 10 random seeds. Best methods and those within 5\% of the best are highlighted in \textbf{bold}.}
  \vspace{-4pt}
  \label{tab:quantative}
\end{table*}

\section{Experiments}
\label{sec:experiment}

We evaluate our \alias on five challenging dexterous manipulation tasks with four from Adroit Hand~\cite{adroit} and one from Shadow Hand environment~\cite{shadow-hand-env}.
Both environments feature a 24-joint Shadow Hand simulator with up to 30 degrees of freedom, designed to closely match the hardware setting~\cite{shadowrobot}. Detailed explanations of the five tasks are provided in Appendix~\ref{append:environ}. 
%
We use the expert demonstrations collected by teleoperation from D4RL~\cite{d4rl} for Adroit tasks (Door, Hammer, Pen and Relocate). However, Shadow Hand environment does not provide demonstration data, so we employ TQC+HER~\cite{TQC,HER} to collect \textbf{5000} expert trajectories for the Block Rotate-Z task.


\begin{table*}[tbh]
  \centering
  \small
  \resizebox{0.9\linewidth}{!}{
  \setlength{\tabcolsep}{10pt}
  \begin{tabular}{cc|ccc}
    \toprule
    \textbf{Environment} & \textbf{Task} & \makecell{\textbf{Diffuser~\cite{diffuser}} (Inpaint)} & \textbf{\makecell{Conditional DP~\cite{decisiondiffuser,diffusion_policy}}} & \bestcell{\textbf{\makecell{\alias (Ours)}}} \\
    \midrule
     Door & Open 90$^{\circ}$ & $56.7$ \scriptsize{\raisebox{1pt}{$\pm 9.4$}} & \textbf{100} \scriptsize{\raisebox{1pt}{$\pm 0$}} \hspace{2.5pt} & \bestcell{$90.0$ \scriptsize{\raisebox{1pt}{$\pm 8.2$}}}  \\
     Door & Open 30$^\circ$ & $16.7$ \scriptsize{\raisebox{1pt}{$\pm 4.7$}} & $16.7$ \scriptsize{\raisebox{1pt}{$\pm 4.7$}} & \bestcell{\textbf{70.0} \scriptsize{\raisebox{1pt}{$\pm 8.2$}}}  \\
     
     \midrule
     Pen & Full Re-orientation & $10.0$ \scriptsize{\raisebox{1pt}{$\pm 0$}}\hspace{5pt} & $80.0$ \scriptsize{\raisebox{1pt}{$\pm 8.2$}} & \bestcell{\textbf{93.3} \scriptsize{\raisebox{1pt}{$\pm 4.7$}}} \\
     Pen & Half-side Re-orientation & \hspace{3pt} $3.3$ \scriptsize{\raisebox{1pt}{$\pm 4.7$}} & $23.3$ \scriptsize{\raisebox{1pt}{$\pm 9.4$}} & \bestcell{$\textbf{40.0}$ \scriptsize{\raisebox{1pt}{$\pm 8.2$}}} \\
     \midrule
     Hammer & Nail Full Drive & $53.3$ \scriptsize{\raisebox{1pt}{$\pm 9.4$}} & $76.7$ \scriptsize{\raisebox{1pt}{$\pm 9.4$}} & \bestcell{\textbf{90.0} \scriptsize{\raisebox{1pt}{$\pm 8.2$}}} \\
     Hammer & Nail Half Drive & \hspace{1pt} $23.3$ \scriptsize{\raisebox{1pt}{$\pm 12.5$}} & $33.3$ \scriptsize{\raisebox{1pt}{$\pm 4.7$}} & \bestcell{\hspace{1.5pt} \textbf{46.7} \scriptsize{\raisebox{1pt}{$\pm 12.5$}}}\\

     \midrule
    Relocate & Full Relocation & $56.7$ \scriptsize{\raisebox{1pt}{$\pm 4.7$}}\hspace{5pt} & $\textbf{96.7}$ \scriptsize{\raisebox{1pt}{$\pm 4.7$}} & \bestcell{\textbf{96.7} \scriptsize{\raisebox{1pt}{$\pm 4.7$}}} \\
     Relocate & Half-side Relocation & $53.3$ \scriptsize{\raisebox{1pt}{$\pm 4.7$}} & \hspace{1.5pt} $86.7$ \scriptsize{\raisebox{1pt}{$\pm 12.5$}} & \bestcell{$\textbf{93.3}$ \scriptsize{\raisebox{1pt}{$\pm 4.7$}}} \\
     
     \midrule
     Manipulate Block & Rotate-Z & \hspace{1pt} $36.7$ \scriptsize{\raisebox{1pt}{$\pm 12.5$}} & $40.0$ \scriptsize{\raisebox{1pt}{$\pm 8.2$}} & \bestcell{\textbf{50.0} \scriptsize{\raisebox{1pt}{$\pm 8.2$}}} \\
     Manipulate Block & Half-side Rotate-Z & $30.0$ \scriptsize{\raisebox{1pt}{$\pm 0$}} \hspace{5pt} & $26.7$ \scriptsize{\raisebox{1pt}{$\pm 4.7$}} & \bestcell{\textbf{36.7} \scriptsize{\raisebox{1pt}{$\pm 4.7$}}}\\
     \midrule
     \multicolumn{2}{c|}{\textbf{Average}} & 34.0 \hspace{.58cm} & 58.0 \hspace{.58cm} & \textbf{70.7\hspace{.68cm}} \\
    \bottomrule
  \end{tabular}}
  \vspace{-7pt}
  \caption{\textbf{Overall performance of dexterous manipulation with goal adaptability on multiple environments and tasks.} We compare our method with one classifier-guided baseline and one classifier-free baseline. The results are calculated over 3 tries for 10 random seeds.}
  \vspace{-12pt}
  \label{tab:overall}
\end{table*}


\subsection{Performance Comparisons on Goal Adaptability in Interaction-Aware Tasks}

We evaluate \alias in the Door environment to test its goal adaptability across various target angles. Specifically, we require the planners to open the door to 30, 50, 70, 90 and 110 degrees, as well as close the door (reversal task). Note that the training data only includes 90-degree door-opening demonstrations.
For some of these tasks, we adjust the environment settings, such as expanding the door's range of motion, to satisfy the evaluation requirements.

We compare \alias with five baselines: two classifier-guided methods (Diffuser~\cite{diffuser} with Goal Inpainting that sets discrete goal states, and Diffuser with Guided Sampling that leverages continuous gradients for fine control), two classifier-free methods (Decision Diffuser~\cite{decisiondiffuser} and Diffusion Policy~\cite{diffusion_policy} that apply diffusion on states and actions respectively), and a variant of \alias (denoted \alias-disc.) that uses goal inpainting. To enhance classifier-free methods' learning of goal condition, we use~\textit{the difference between the current door angle and target angle} as the condition, rather than a fixed 90$^\circ$ target.


The results are shown in Tab.~\ref{tab:quantative}. Classifier-free methods perform well on the 90$^\circ$ task, but their success declines sharply on new target angles, indicating limited adaptability to out-of-distribution targets. Classifier-guided methods demonstrate moderate but consistent performance across goal-adaptive tasks yet their overall success rates remain suboptimal due to imprecise state-action relation modeling in the policy.
%
%
Our \alias
achieves consistently high success rates across nearly all tasks. 
The slightly lower performance ($90.0\%$) on the training task (90$^\circ$) compared to classifier-free methods stems from our additional guidance for adaptation. When ablating this,
\alias achieves $96.7\pm4.7\%$ success rate on Open 90$^\circ$, which is a reasonable trade-off for better generalization.
Averaging a 59.2\% success rate, over twice that of the next best method (29.5\%), \alias demonstrates robust adaptability across both in-domain and goal-adaptive scenarios.

Besides, we also observe a trend that goals closer to the original training data don't have higher success rates than others. We suppose it's because when target angle is close to training, learned dynamics often override guidance. We
observed 8 out of 14 failures in 30 tries in 70$^\circ$ task opened to 90$^\circ$ instead, supporting our hypothesis. This learned bias is harder to correct than for more distant angles.

\begin{figure*}[tb]
  \centering
   \includegraphics[width=0.99\linewidth]{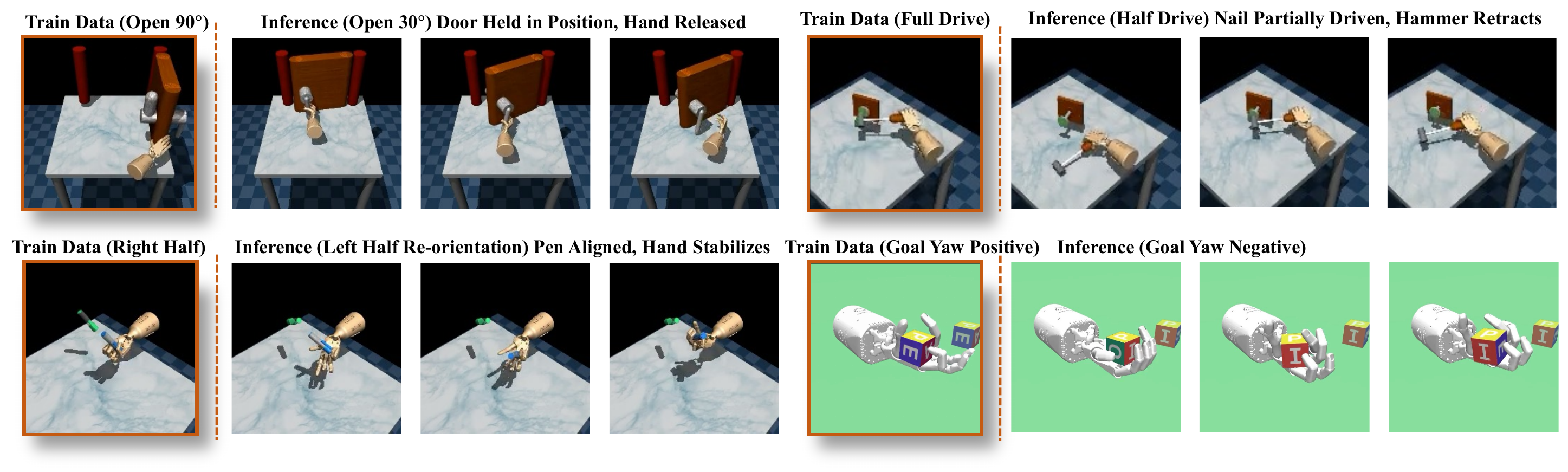}
   \vspace{-12pt}
   \caption{\textbf{Visualization results of goal-adaptive tasks by \alias.} For each task, training data sample (with \textcolor{orange}{orange stroke}) is followed by inference on novel goals beyond the training data. In the Door task, \alias guides the door to new target angle (30$^\circ$) and holds the door in position when the hand releases, \textit{which cannot be attained by simply truncating actions from 90$^\circ$ training data}. \alias avoids ghost states and achieves better goal adaptability.
   }
   \label{fig:visualization}
   \vspace{-14pt}
\end{figure*}

\subsection{Evaluation on Various Dexterous Tasks}

To evaluate the cross-task adaptability and goal-oriented performance of \alias, we test it across multiple dexterous manipulation tasks in Door, Pen, Hammer, Relocate and Block environments, as summarized in Tab.~\ref{tab:overall}. In addition to the Door task, the Pen task involve aligning a pen to the specified orientation, with a particularly challenging goal-adaptability variant, Half-side Re-orientation, where training data includes only right-hemisphere orientations while test goals require left-hemisphere rotations. Similarly, the Block Rotate-Z and Object Relocation tasks have block's half-side variant trained on positive goal yaw angles but tested on negative ones and object's target right-half table training but left-half testing. The Nail Half Drive requires the hand to drive a nail and stop halfway before retracting, testing control precision for partial goals.

We compare \alias with two baselines: Diffuser~\cite{diffuser} (Inpainting), using classifier-guided goal inpainting as in the previous section, and Conditional DP~\cite{decisiondiffuser,diffusion_policy}, a classifier-free approach with state diffusion for Door, Hammer and Relocate tasks while action diffusion for Pen and Block tasks, as modeling dynamics for these tasks are particularly challenging, making direct action generation more effective than state-based diffusion. As shown in Tab.~\ref{tab:overall}, \alias consistently achieves superior results across both in-domain and goal-adaptive tasks. 
%
Although conditional DP demonstrates 23.3\% on the challenging pen half-side re-orientation, leveraging the inherent multi-modality and anisotropy of diffusion models, \alias still performs better (40.0\%). These results underscore \alias's robustness and adaptability across a range of tasks, demonstrating generalization on familiar goals and novel configuration challenges.

\subsection{Validation for Preventing Ghost States}
We measured \textit{L2 distance} between predicted and simulated hand-object states (normalized per dimension for fair comparison) in Tab.~\ref{tab:ghost_state}. \alias nearly halves baseline's gap across tasks, illustrating its ghost-state reduction effect. 

\begin{table}[tb]
\centering
\small
\tabcolsep3.5pt
\resizebox{1.\linewidth}{!}{
\begin{tabular}{l |c c c c c}
\toprule
\textbf{Adapt Tasks} & \textbf{Door 30$^\circ$} & \textbf{Door 70$^\circ$} & \textbf{Pen Half} & \textbf{Hammer Half} & \textbf{Relocate Half}\\
\midrule
\textbf{Diffuser~\cite{diffuser}} & 4.19 & 4.03 & 5.23 & 4.01 & 5.48\\
\bestcell{\textbf{\alias}} & \bestcell{\textbf{2.92}} & \bestcell{\textbf{2.38}} & \bestcell{\textbf{2.76}} & \bestcell{\textbf{2.41}} & \bestcell{\textbf{3.22}}\\
\bottomrule
\end{tabular}}
\vspace{-7.5pt}
  \caption{\textbf{Quantitative results for preventing ghost states over 3 tries.} (Conditional DP is not included due to its action-only.)}
    \label{tab:ghost_state}
    \vspace{-15pt}
\end{table}

\subsection{Ablation on LLM-based Guidance Generation}
Table~\ref{tab:LLM_gen} presents results for different guidance methods on goal adaptability tasks. All three methods are based on the same joint state-action diffusion model. The Human Craft approach reflects our above results with manually designed guidance. LLM Gen generate guidance functions with Claude Sonnet 3.5~\cite{claude_3_5}. And Na\"\i ve Guide directly guides the object to the goal, corresponding to ghost-state existing baseline. 
Results indicate that both Human Craft and LLM Gen significantly outperform Na\"\i ve Guide across tasks, with Human Craft achieving the highest success rates.

\begin{table}[t]
  \centering
  \small
  \resizebox{\linewidth}{!}{
  \begin{tabular}{cc|cc}
    \toprule
    \textbf{Task} & \textbf{Na\"\i ve Guide} & \textbf{Human Craft} & \textbf{LLM Gen}\\
    \midrule
    Door Open 30$^{\circ}$ & $0$ & $70.0$ \scriptsize{\raisebox{1pt}{$\pm 8.2$}} & $40.0$ \scriptsize{\raisebox{1pt}{$\pm 8.2$}} \\
    Pen Half-side Re-orien & $20.0$ \scriptsize{\raisebox{1pt}{$\pm 8.2$}} & $40.0$ \scriptsize{\raisebox{1pt}{$\pm 8.2$}} & $26.7$ \scriptsize{\raisebox{1pt}{$\pm 4.7$}} \\
    Hammer Half Nail & $20.0$ \scriptsize{\raisebox{1pt}{$\pm 8.2$}} & \hspace{1pt} $46.7$ \scriptsize{\raisebox{1pt}{$\pm 12.5$}} & $43.3$ \scriptsize{\raisebox{1pt}{$\pm 9.4$}}\\
    \bottomrule
  \end{tabular}}
  \vspace{-8pt}
  \caption{\textbf{Ablation study on LLM-based guidance generation.}}
  \vspace{-14pt}
  \label{tab:LLM_gen}
\end{table}

\subsection{Ablation Study of \alias Framework}

We analyze the contribution of each component in \alias through ablation studies (Tab.~\ref{tab:ablation}), across multiple door-opening tasks (open 30$^\circ$, 50$^\circ$, 70$^\circ$, and 90$^\circ$), using the same training checkpoint for fair comparison. The baseline Diffuser\cite{diffuser} uses a basic goal-guidance strategy, while Dyn-guide enhances it with dynamics guidance for better state-action consistency. Joint S\&A adopts joint state-action denoising like \alias but retains naive goal guidance. \alias incorporates all components and achieves the highest success rate of 67.5\%, significantly outperforming the other configurations and demonstrating the effectiveness of our full design.

\subsection{Visualizations}
We visualize the behavior of \alias across various goal-adaptive dexterous tasks in Fig.~\ref{fig:visualization}.
%
\alias ensures realistic contact by aligning hands with contact points first using joint dynamics modeling, eliminating ghost states.
Notably, for example, \alias guides the door to new target angle and holds the door steady when the hand releases , which cannot be achieved by policies trained with slicing 90$^\circ$ data.
%
%
These results underscore \alias's ability to maintain physically realistic interactions while adapting to novel goals.

\begin{table}[t]
  \centering
  \small
  \resizebox{1.\linewidth}{!}{
  \begin{tabular}{cccccc}
    \toprule
    \textbf{Method} & \textbf{\makecell{Goal\\Guidance}} & \textbf{\makecell{Dynamics\\Guide}} & \textbf{\makecell{Joint State\\Action}} & \textbf{\makecell{Interact\\Mechanism}} & \textbf{\makecell{Overall\\SR}}\\
    \midrule
     \textbf{No-guide} & $\times$ & $\times$ & $\times$ & $\times$ & $24.1$\\
     \textbf{Diffuser}~\cite{diffuser} & \checkmark & $\times$ & $\times$ & $\times$ & $27.5$\\
     \textbf{Dyn-guide} & \checkmark & \checkmark & $\times$ & $\times$ & $27.5$\\
     \textbf{Joint S\&A} & \checkmark & $\times$ & \checkmark & $\times$ & $30.8$\\
     \textbf{Dyn+Joint} & \checkmark & \checkmark & \checkmark & $\times$ & $31.7$\\
     \bestcell{\textbf{\alias}} & \bestcell{\checkmark} & \bestcell{\checkmark} & \bestcell{\checkmark} & \bestcell{\checkmark} & \bestcell{\textbf{67.5}}\\
    \bottomrule
  \end{tabular}}
  \vspace{-8pt}
  \caption{\textbf{Ablation study on \alias framework.} We report the average success rates (overall SR) on Adroit Door environment over open 30$^\circ$, 50$^\circ$, 70$^\circ$ and 90$^\circ$ tasks.}
  \vspace{-12pt}
  \label{tab:ablation}
\end{table}

\subsection{Efficiency}
We test the control frequency of \alias on an RTX 3090 with receding horizon set as 8 for all tasks except Door (32 instead). The control frequency are reported below.

\vspace{-6pt}
\begin{table}[h]
\centering
\small
\tabcolsep3.5pt
    \resizebox{0.75\linewidth}{!}{
\begin{tabular}{l |c c c c c}
\toprule
\textbf{Task} & \textbf{Door} & \textbf{Pen} & \textbf{Hammer} & \textbf{Relocate} & \textbf{Block}\\
\midrule
\textbf{Freq.} & 5.04 Hz & 5.88 Hz &  5.86 Hz & 5.78 Hz & 6.92 Hz\\
\bottomrule
\end{tabular}}
\vspace{-8pt}
  \caption{\textbf{Control command frequency over 10 tries.}}
    \label{tab:frequency}
    \vspace{-11pt}
\end{table}

Besides, our lightweight model (3.96M~params, 3.27~GFLOPS) can be further accelerated via DPM Solver++~\cite{dpmsolver++} (4x~speedup) and command interpolation (reaching 36 Hz), sufficient for real robot control.

\section{Conclusion}
\label{sec:conclusion}

This work presents \alias, an interaction-aware diffusion planner for adaptive dexterous manipulation. 
%
By modeling joint state-action dynamics and incorporating a dual-phase diffusion mechanism, it addresses action-state consistency issues, including the ``ghost state'' and generalization problems observed in previous diffusion methods.
\alias's design enables it to handle intricate multi-contact interactions through a pre-contact alignment and a post-contact control.
%
%
%
We believe its potential to advance the field toward diverse dexterous tasks while remaining grounded in real physics and dynamics.

\vspace{1pt}\noindent\textbf{Future Work} can investigate deployment with hand states sensed and object poses estimated by vision models.

\clearpage
\section*{Acknowledgements}
This paper is partially supported by the General Research Fund of Hong Kong No.17200622 and 17209324, and the Jockey Club STEM Lab of Autonomous Intelligent Systems funded by The Hong Kong Jockey Club Charities Trust. 

{
    \small
    \bibliographystyle{ieeenat_fullname}
    \bibliography{main}
}

\clearpage
\appendix
\setcounter{page}{1}
\maketitlesupplementary

\section{Brief Theoretical Review of Gradient Guidance in Classifier-guided Diffusion Model}
\label{append:math}
For a trajectory $\btau$, we define the reverse process of a standard diffusion model as $p_{\theta}(\btau^i|\btau^{i+1})$. To enable goal-directed generation, we introduce a classifier $p_{\phi}(\by|\btau^i)$ that evaluates whether a noisy trajectory $\btau^i$ satisfies the goal condition $\by$. The combined process is denoted as $p_{\theta,\phi}(\btau^i|\btau^{i+1},\by)$.

Under property of Markov process in diffusion model illustrated by~\cite{diffusion_beatgan,adaptdiffuser}, we can establish:
\begin{equation}
p_{\theta, \phi}\left(\by \mid \btau^{i}, \btau^{i+1}\right) = p_{\phi}\left(\by \mid \btau^{i}\right).
\end{equation}

This leads to our first key theorem:
\begin{theorem}
The conditional sampling probability of the reverse diffusion process $p_{\theta,\phi}(\btau^i \mid \btau^{i+1},\by)$ can be decomposed into a product of the unconditional transition probability $p_{\theta}(\btau^i \mid \btau^{i+1})$ and the classifier probability $p_{\phi}(\by \mid \btau^i)$, up to a normalizing constant $Z$:
\begin{equation}
p_{\theta,\phi}(\btau^i \mid \btau^{i+1},\by) = Z p_{\theta}(\btau^i \mid \btau^{i+1})p_{\phi}(\by \mid \btau^i).
\label{eq:theorem1}
\end{equation}
\end{theorem}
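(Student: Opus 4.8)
The plan is to apply Bayes' rule to the joint reverse transition conditioned on the goal, and then invoke the Markov property of the diffusion chain to collapse the classifier term. First I would write, by definition of conditional probability,
\[
p_{\theta,\phi}(\btau^i \mid \btau^{i+1},\by) = \frac{p_{\theta,\phi}(\btau^i, \by \mid \btau^{i+1})}{p_{\theta,\phi}(\by \mid \btau^{i+1})} = \frac{p_{\theta}(\btau^i \mid \btau^{i+1})\, p_{\theta,\phi}(\by \mid \btau^i, \btau^{i+1})}{p_{\theta,\phi}(\by \mid \btau^{i+1})},
\]
where the second factorization in the numerator splits the joint into the unconditional reverse transition $p_{\theta}(\btau^i\mid\btau^{i+1})$ (which does not depend on $\by$, since $\by$ only enters through the classifier) times the classifier evaluated at $\btau^i$ given $\btau^{i+1}$.

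Next I would use the identity stated just before the theorem, $p_{\theta,\phi}(\by\mid\btau^i,\btau^{i+1}) = p_{\phi}(\by\mid\btau^i)$, which follows from the Markov structure of the forward/reverse diffusion process: given $\btau^i$, the goal label $\by$ is conditionally independent of $\btau^{i+1}$. Substituting this in gives
\[
p_{\theta,\phi}(\btau^i \mid \btau^{i+1},\by) = \frac{p_{\theta}(\btau^i \mid \btau^{i+1})\, p_{\phi}(\by \mid \btau^i)}{p_{\theta,\phi}(\by \mid \btau^{i+1})}.
\]
Finally I would observe that the denominator $p_{\theta,\phi}(\by\mid\btau^{i+1})$ does not depend on $\btau^i$ — it is a function of $\btau^{i+1}$ and $\by$ only — so it acts as a normalizing constant from the perspective of the distribution over $\btau^i$. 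Setting $Z \triangleq 1/p_{\theta,\phi}(\by\mid\btau^{i+1})$ yields exactly Eq.~\eqref{eq:theorem1}, and one can verify that $Z$ is indeed the constant making the right-hand side integrate to one over $\btau^i$.

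The main obstacle, such as it is, is justifying the conditional-independence step rigorously: one must be careful that the combined model $p_{\theta,\phi}$ is \emph{defined} so that the goal $\by$ influences sampling only through the classifier at the current noise level, and that the reverse chain inherits the Markov property from the (Gaussian) forward chain. Since the paper takes this Markov property as given from~\cite{diffusion_beatgan,adaptdiffuser} and already states the needed identity $p_{\theta,\phi}(\by\mid\btau^i,\btau^{i+1}) = p_{\phi}(\by\mid\btau^i)$ immediately above the theorem, the remaining argument is the routine Bayes manipulation sketched above; no smoothness or Lipschitz assumptions are needed at this stage (those enter only later, when passing to the Gaussian approximation of the guided transition).
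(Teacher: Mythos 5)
Your proposal is correct and follows essentially the same route as the paper's proof: a Bayes'-rule expansion of the guided transition, the Markov-property identity $p_{\theta,\phi}(\by\mid\btau^i,\btau^{i+1}) = p_{\phi}(\by\mid\btau^i)$ stated before the theorem, and the observation that $p(\by\mid\btau^{i+1})$ is independent of $\btau^i$, hence absorbed into $Z$. The only cosmetic difference is that you condition on $\btau^{i+1}$ throughout while the paper expands the full joint $p(\btau^i,\btau^{i+1},\by)$ before cancelling $p_{\theta}(\btau^{i+1})$; the arguments are otherwise identical.
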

\begin{proof}
By applying Bayes' theorem:
\begin{equation*}
\begin{split}
p_{\theta,\phi}(\btau^i \mid &\btau^{i+1},\ \by) = \frac{p_{\theta, \phi}\left(\btau^{i}, \btau^{i+1}, \by\right)}{p_{\theta, \phi}\left(\btau^{i+1}, \by\right)}
\\
& =\frac{p_{\theta, \phi}\left(\by \mid \btau^{i}, \btau^{i+1}\right) p_{\theta}\left(\btau^{i}, \btau^{i+1}\right)}{p_{\phi}\left(\by \mid \btau^{i+1}\right) p_{\theta}\left(\btau^{i+1}\right)}
\\
& =\frac{p_{\theta, \phi}\left(\by \mid \btau^{i}, \btau^{i+1}\right) p_{\theta}\left(\btau^{i} \mid \btau^{i+1}\right) p_{\theta}\left(\btau^{i+1}\right)}{p_{\phi}\left(\by \mid \btau^{i+1}\right) p_{\theta}\left(\btau^{i+1}\right)}
\\
& =\frac{p_{\phi}\left(\by \mid \btau^{i}\right) p_{\theta}\left(\btau^{i} \mid \btau^{i+1}\right)}{p_{\phi}\left(\by \mid \btau^{i+1}\right)},
\end{split}
\label{eq:guided-diff}
\end{equation*}
where $p_{\phi}\left(\by \mid \btau^{i+1}\right)$ becomes the normalizing constant $Z$ as it is independent of $\btau^i$.
\end{proof}

For practical implementation, we derive:
\begin{theorem}
Under the assumption of sufficient reverse diffusion steps, the conditional sampling probability $p_{\theta,\phi}(\btau^i|\btau^{i+1},\by)$ can be approximated by a modified Gaussian distribution, where the mean is shifted by the classifier gradient and the variance remains unchanged from the unconditional process:
\begin{equation}
p_{\theta,\phi}(\btau^i|\btau^{i+1},\by) \approx \mathcal{N}(\btau^i; \mu_{\theta} + \Sigma \nabla_{\btau} \log p_{\phi}\left(\by \mid \btau^{i}\right), \Sigma),
\end{equation}
where $\mu_{\theta}$ and $\Sigma$ denote the mean and variance of the unconditional reverse diffusion process $p_{\theta}(\btau^i \mid \btau^{i+1})$.
\end{theorem}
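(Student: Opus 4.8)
The plan is to start from the exact decomposition established in Theorem 1, namely $p_{\theta,\phi}(\btau^i\mid\btau^{i+1},\by) = Z\, p_{\theta}(\btau^i\mid\btau^{i+1})\, p_{\phi}(\by\mid\btau^i)$, and then approximate each of the two factors locally around the mean $\mu_{\theta}$ of the unconditional reverse kernel. The unconditional factor is already Gaussian by construction of the reverse process under the "sufficient reverse diffusion steps" assumption: $p_{\theta}(\btau^i\mid\btau^{i+1}) = \mathcal{N}(\btau^i;\mu_{\theta},\Sigma)$, so $\log p_{\theta}(\btau^i\mid\btau^{i+1}) = -\tfrac{1}{2}(\btau^i-\mu_{\theta})^{\top}\Sigma^{-1}(\btau^i-\mu_{\theta}) + C$ for a constant $C$ independent of $\btau^i$.

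Next I would handle the classifier factor by a first-order Taylor expansion of $\log p_{\phi}(\by\mid\btau^i)$ about $\btau^i = \mu_{\theta}$. Writing $g \triangleq \nabla_{\btau}\log p_{\phi}(\by\mid\btau^i)\big|_{\btau^i=\mu_{\theta}}$, we get $\log p_{\phi}(\by\mid\btau^i) \approx (\btau^i-\mu_{\theta})^{\top} g + C'$, where the higher-order terms are negligible because $\Sigma$ is small when the number of diffusion steps is large (so the relevant neighborhood of $\mu_{\theta}$ is tight), and $C'$ is again a $\btau^i$-independent constant. Adding the two log-densities and the $\log Z$ term, the exponent becomes $-\tfrac{1}{2}(\btau^i-\mu_{\theta})^{\top}\Sigma^{-1}(\btau^i-\mu_{\theta}) + (\btau^i-\mu_{\theta})^{\top} g + \text{const}$.

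The final step is to complete the square: the quadratic-plus-linear form $-\tfrac{1}{2}\bx^{\top}\Sigma^{-1}\bx + \bx^{\top}g$ (with $\bx = \btau^i-\mu_{\theta}$) equals $-\tfrac{1}{2}(\bx - \Sigma g)^{\top}\Sigma^{-1}(\bx - \Sigma g)$ up to a term not depending on $\bx$. Re-substituting $\bx = \btau^i - \mu_{\theta}$ shows the exponent is, up to an additive constant absorbed into the normalization, that of $\mathcal{N}(\btau^i;\mu_{\theta}+\Sigma g,\Sigma)$. Hence $p_{\theta,\phi}(\btau^i\mid\btau^{i+1},\by)\approx\mathcal{N}(\btau^i;\mu_{\theta}+\Sigma\nabla_{\btau}\log p_{\phi}(\by\mid\btau^i),\Sigma)$, which is the claim; the normalization constant (the old $Z$ times the Gaussian factors from completing the square) is fixed automatically since the result must integrate to one.

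The main obstacle is justifying that the linear Taylor approximation of $\log p_{\phi}(\by\mid\btau^i)$ is legitimate — this is precisely where the "sufficient reverse diffusion steps" hypothesis does the work, since it makes $\Sigma$ (and thus the width of the Gaussian that multiplies the classifier term) small enough that the curvature of $\log p_{\phi}$ contributes only lower-order corrections; I would state this as a smoothness/Lipschitz-type assumption on $\nabla_{\btau}\log p_{\phi}$ analogous to the Lipschitz condition already invoked for Eq.~\eqref{eq:guided}. Everything after that is the routine Gaussian algebra of completing the square.
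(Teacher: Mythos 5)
Your proposal is correct and follows essentially the same route as the paper's proof: starting from the Theorem 1 decomposition, using the Gaussian form of $p_{\theta}(\btau^i\mid\btau^{i+1})$, taking a first-order Taylor expansion of $\log p_{\phi}(\by\mid\btau^i)$ about $\mu_{\theta}$, and completing the square. Your extra remark that small $\Sigma$ under many reverse steps justifies dropping higher-order terms makes explicit what the paper leaves implicit, but the argument is otherwise identical.
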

\begin{proof}
First, express the unconditional process as:
\begin{align*}
p_{\theta}(\btau^i \mid \btau^{i+1}) &= \mathcal{N}(\btau^i; \mu_{\theta}, \Sigma). 
\\
\log p_{\theta}(\btau^i \mid \btau^{i+1}) &= -\frac{1}{2}(\btau^i - \mu_{\theta})^T \Sigma^{-1} (\btau^{i} - \mu_\theta) + C.
\end{align*}
Apply Taylor expansion to $\log p_{\phi}\left(\by \mid \btau^{i}\right)$ around $\btau^i=\mu_{\theta}$:
\begin{align*}
\log p_{\phi}\left(\by \mid \btau^{i}\right) &= \log p_{\phi}\left(\by \mid \btau^{i}\right)|_{\btau^{i}=\mu_{\theta}}\\
&+\left.\left(\btau^{i}-\mu_{\theta}\right) \nabla_{\btau^{i}} \log p_{\phi}\left(\by \mid \btau^{i}\right)\right|_{\btau^{i}=\mu_{\theta}}.
\end{align*}
Applying the logarithm to both sides of Eq.~\ref{eq:theorem1}:
\begin{equation*}
\begin{split}
\log p_{\theta,\phi}(\btau^i |\btau^{i+1},\by) &= \log p_{\theta}(\btau^i|\btau^{i+1}) + \log p_{\phi}(\by|\btau^i)+C_1\\
&= -\frac{1}{2}\left(\btau^{i}-\mu_{\theta}\right)^{T} \Sigma^{-1}\left(\btau^{i}-\mu_{\theta}\right)\\
&\ \ \ \ \ \ \ +\left(\btau^{i}-\mu_{\theta}\right) \nabla \log p_{\phi}\left(\by \mid \btau^{i}\right) +C_{2}
\end{split}
\end{equation*}
Completing the square yields:
\begin{equation*}
\begin{split}
RHS =-&\frac{1}{2}\left(\btau^{i}-\mu_{\theta}-\Sigma \nabla \log p_{\phi}\left(\by \mid \btau^{i}\right)\right)^{T} \Sigma^{-1}\\
&\times\left(\btau^{i}-\mu_{\theta}-\Sigma \nabla \log p_{\phi}\left(\by \mid \btau^{i}\right)\right)+C_{3}.
\end{split}
\end{equation*}

This establishes the Gaussian form of the approximation.
\end{proof}

This theoretical framework underlies our goal-directed diffusion planning approach.

\section{Environment Settings}
\label{append:environ}
The door task represents multi-stage manipulation where the hand must reach and rotate a door handle, then pull or push the door to a target angle. The hammer task tests tool use capabilities, requiring the hand to grasp the hammer and strike a nail, while the pen and the block task evaluates in-hand dexterity, targeting continuous object reorientation. 
And the object relocation task requires to grasp the ball first and then move to the desired position.

\section{More Visualizations}

Different from concurrent work~\cite{weng2024dexdiffuser} that focuses on grasping tasks, we conduct experiments on challenging dexterous manipulation benchmarks including door, pen, hammer, and block manipulation tasks, which require sophisticated contact-rich interactions and precise goal-directed control.

\subsection{Goal Adaptive Door Tasks}

\begin{figure*}[tb]
  \centering
   \includegraphics[width=0.99\linewidth]{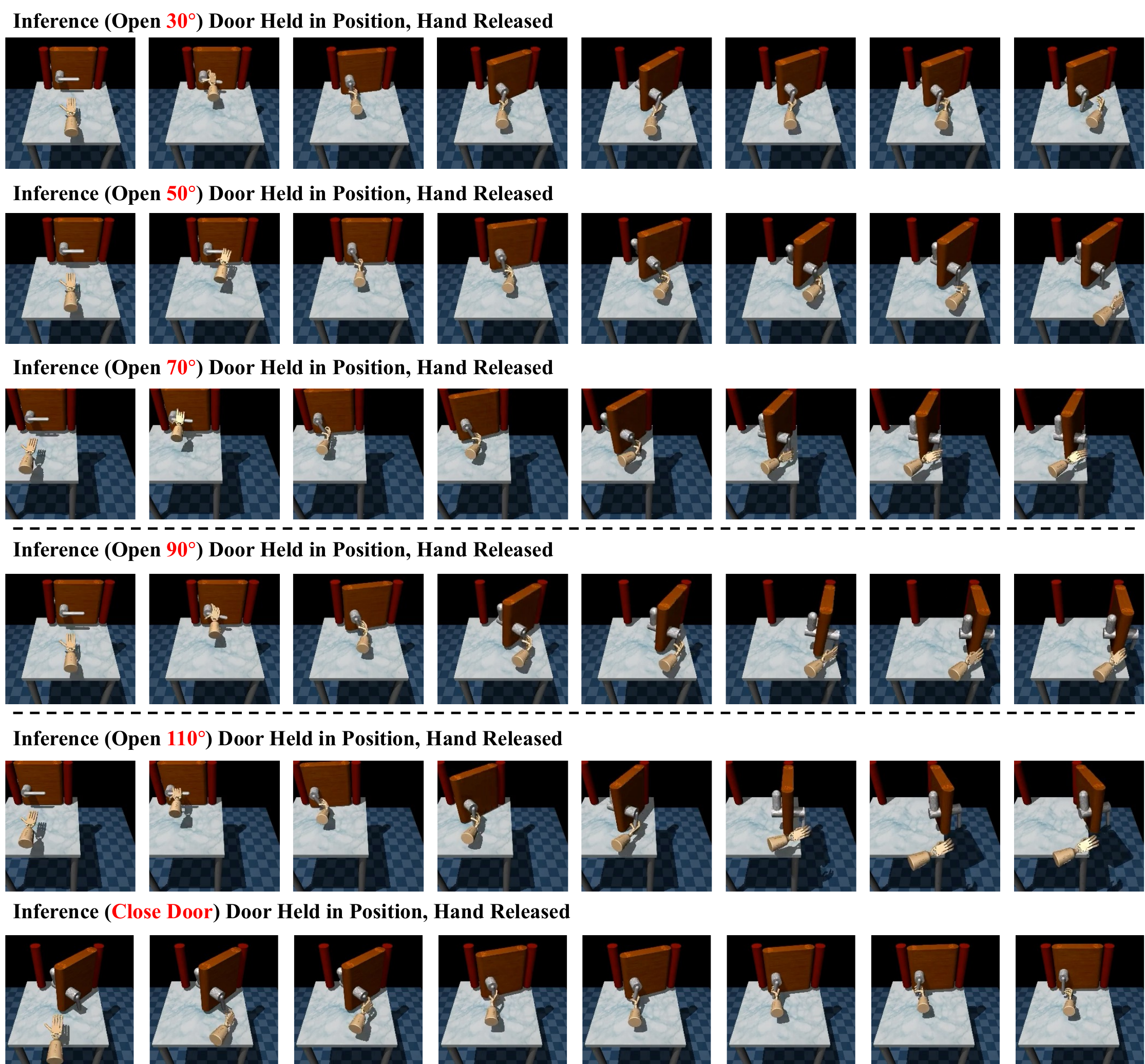}
   \vspace{-5pt}
   \caption{\textbf{Visualization of goal-adaptive door manipulation.} Despite training only on 90$^\circ$ demonstrations, \alias adapts to various target angles (30$^\circ$-110$^\circ$) and door closing, maintaining stable control and physical consistency throughout the motion sequence.}
   \label{fig:vis_door}
   \vspace{-10pt}
\end{figure*}

We present detailed visualizations of \alias's performance on various door manipulation tasks in Fig.~\ref{fig:vis_door}, demonstrating its adaptability to different target angles and even task reversal. Each row shows a sequence of eight frames capturing key moments in the manipulation process.

For opening tasks with different target angles, we observe consistent behavior patterns: the hand first approaches and grasps the handle, then rotates it precisely to the specified angle, and finally releases while maintaining the door's position. Notably, even though trained only on 90$^\circ$ demonstrations, \alias successfully generalizes to both smaller angles (30$^\circ$, 50$^\circ$, 70$^\circ$) and a larger angle (110$^\circ$), maintaining stable control throughout the motion.

The final row demonstrates the model's capability for task reversal - closing the door. This is particularly challenging as it requires adapting the learned manipulation strategy in the opposite direction. The sequence shows the hand approaching the open door, grasping the handle, and smoothly guiding it to the closed position.

Across all variations, we observe several key characteristics: (1) Consistent contact-rich interaction phases; (2) Precise angle control regardless of target; (3) Stable door holding after reaching the target; (4) Smooth hand retraction while maintaining door position.

These visualizations illustrate \alias's robust goal adaptation capabilities while maintaining physical realism in the manipulation process.

\begin{figure*}[tb]
  \centering
   \includegraphics[width=0.99\linewidth]{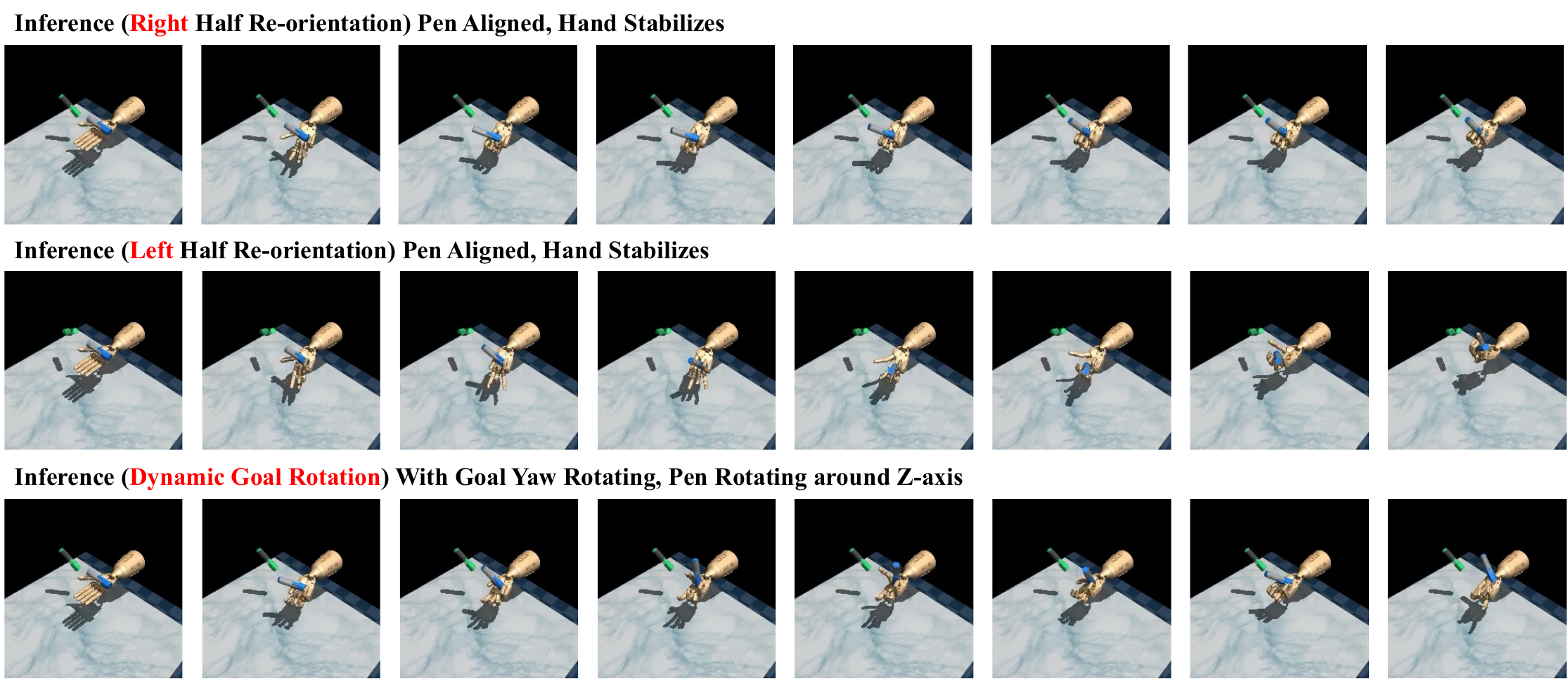}
   \vspace{-7pt}
   \caption{\textbf{Visualization of pen manipulation tasks.} Top: right-half re-orientation (training distribution). Middle: left-half re-orientation, requiring challenging large-arc rotation from the initial horizontal-right position. Bottom: dynamic goal tracking where \textbf{target yaw angle rotates uniformly}, demonstrating the model's ability to generalize from static to dynamic goals.}
   \label{fig:vis_pen}
   \vspace{-8pt}
\end{figure*}

\begin{figure*}[tb]
  \centering
   \includegraphics[width=0.99\linewidth]{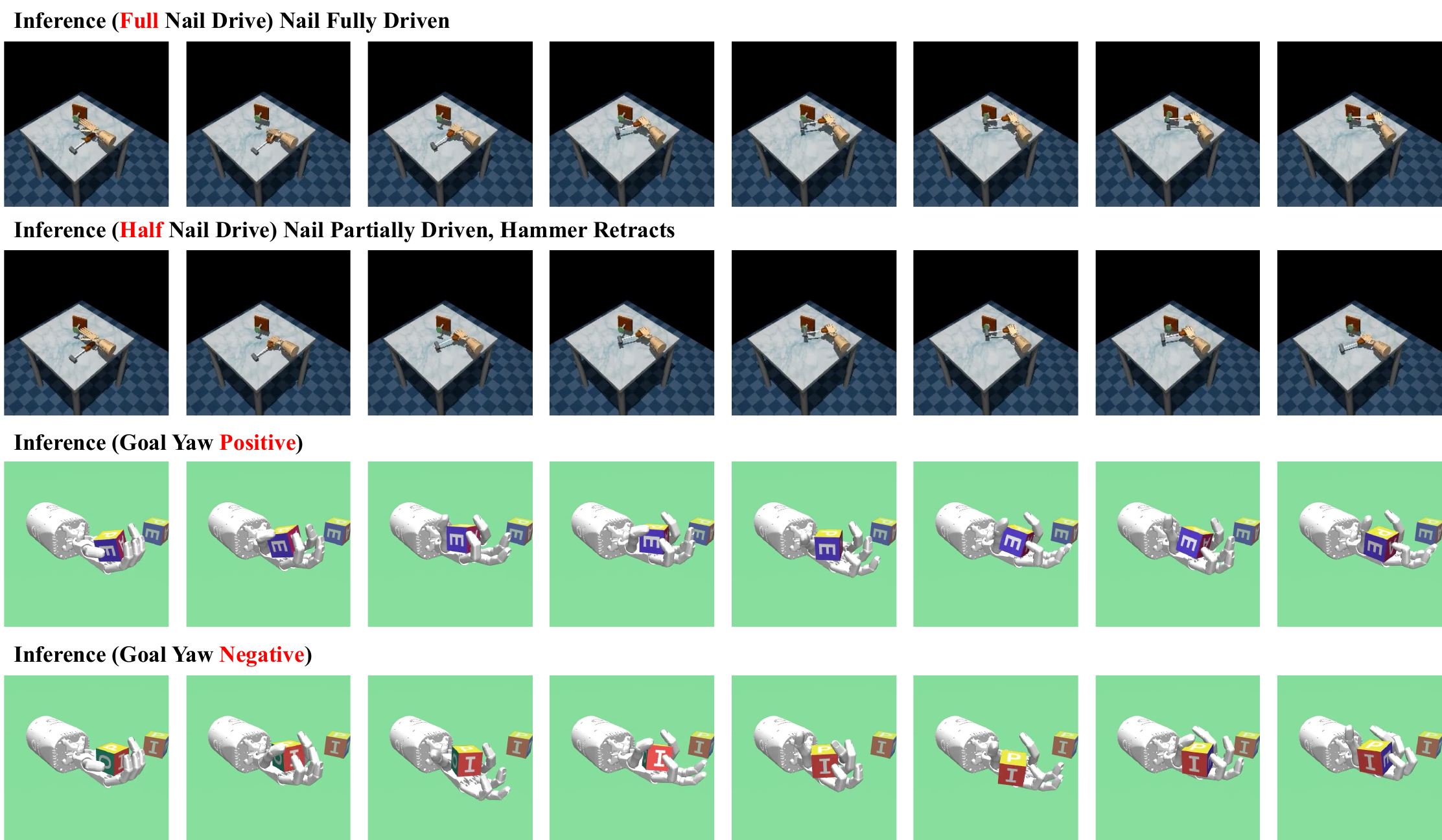}
   \vspace{-5pt}
   \caption{\textbf{Visualization of hammer and block manipulation tasks.} Top two rows: full and partial nail-driving tasks, demonstrating precise control over interaction depth. Bottom two rows: block orientation tasks with quaternion-based pose control, showing adaptation to both positive and negative yaw rotations while maintaining multi-angle alignment.}
   \label{fig:vis_hammer_block}
   \vspace{-8pt}
\end{figure*}

\subsection{Other Dexterous Manipulation Tasks}

First, we showcase our model's capabilities on pen manipulation tasks with detailed visualizations, in Fig.~\ref{fig:vis_pen}. The first two rows demonstrate the model's performance on standard re-orientation tasks: right-half and left-half re-orientation. Notably, as the pen starts from a horizontal-right position, the left-half re-orientation (second row) is particularly challenging, requiring a large rotational arc of nearly 180 degrees to reach the target orientation in the left hemisphere.

Beyond these static goal tasks, we further evaluate our model's adaptability through a dynamic goal rotation task (third row). Using the model trained on full re-orientation data, we design a scenario where the target orientation's \textit{yaw} angle uniformly rotates over time. The visualization demonstrates that our model successfully learns the underlying rotational dynamics \textit{around the z-axis}, smoothly tracking the time-varying target while maintaining stable manipulation.

For the hammer task in Fig.~\ref{fig:vis_hammer_block}, we demonstrate both full and partial nail-driving capabilities. The first row shows the complete nail-driving sequence, where the hand grasps the hammer, positions it precisely, and drives the nail fully into the board. The second row showcases our partial driving task, where the model exhibits precise control by stopping halfway and smoothly retracting the hammer, demonstrating fine-grained control over the manipulation process.

For the block manipulation task also in Fig.~\ref{fig:vis_hammer_block}, we present two scenarios of quaternion-based orientation control. In the first sequence (Goal Yaw Positive), the hand needs to carefully adjust multiple rotational degrees of freedom to achieve the target pose, as the task requires alignment in all three orientation angles. The second sequence (Goal Yaw Negative) presents a more challenging scenario, requiring a larger rotational motion around the z-axis while maintaining control over other orientation angles. This demonstrates our model's capability to handle complex, multi-dimensional orientation targets in quaternion space.

\section{Implementation Details}
We implement our framework following standard diffusion model settings~\cite{diffuser} with several modifications:

\myparagraph{Network Architecture.} We adopt a temporal U-Net~\cite{ronneberger2015u} architecture consisting of 6 residual blocks for noise prediction. Each block contains dual temporal convolutions with group normalization~\cite{wu2018group}, followed by a Mish activation~\cite{wu2018group}. Timestep information is injected through a linear embedding layer and added after the first convolution in each block. The dynamics model uses a 3-layer MLP with batch normalization, ReLU activation, and hidden dimension $512$.

\myparagraph{Training Configuration.} The model is optimized using Adam~\cite{kingma2014adam} optimizer with a learning rate of $2\times10^{-4}$ and batch size $256$, trained for $5\times10^{5}$ steps across all tasks. For both our method and the classifier-free baselines~\cite{decisiondiffuser,diffusion_policy}, we predict the denoised trajectory $\btau_{0}$ directly rather than the noise term $\epsilon$, which is incentive to the performance of classifier-free methods.

\myparagraph{Task-Specific Parameters.} We use different planning horizons during training ($T=32$) and inference ($T=8$ for door / block tasks, $T=32$ for hammer / pen tasks). The diffusion process uses $K=20$ denoising steps across tasks. 

The guidance scale $\alpha$ is task-dependent, selected from $\{500, 1000, 2000\}$ based on empirical performance.

\myparagraph{Computational Resources.} All models are trained on a single NVIDIA GeForce RTX 3090 GPU, requiring training for approximately $30$ hours per task.

\section{LLM-based Guidance Generation Prompts}
\label{appendix:prompts}

\subsection{Overview}
We present our structured prompting strategy for generating guidance functions through LLMs, which can be abstracted by the experts who developed the environment. Our prompts comprise several key components:

\vspace{3pt}\myparagraph{Expert Role Definition.} We begin by defining the LLM's role as an expert in robotics, diffusion models, and code generation, specifically focusing on developing guidance functions for diffusion-based planners.

\vspace{3pt}\myparagraph{Environment Abstraction.} The environment is represented through a comprehensive class hierarchy:
\begin{itemize}
    \item BaseEnv: Contains core components (hand, objects) and observation space definition;
    \item AdroitHand: Detailed 28-DOF joint specification;
    \item Supporting Classes: Door, Handle, \etc, with physical properties and state representations.
\end{itemize}

\vspace{3pt}\myparagraph{Technical Context.} We provide three essential contexts:
\begin{itemize}
    \item Interaction Knowledge: Defines dual-phase guidance strategy (pre-interaction and post-interaction);
    \item Function Call Paradigms: Specifies normalization handling and dynamics model usage through function call;
    \item Differentiability Requirements: Ensures differentiability, proper tensor operations, and physical consistency.
\end{itemize}

\vspace{3pt}\myparagraph{Generation Hints.} We include:
\begin{itemize}
    \item Task Instruction;
    \item Task-specific constraints and requirements;
    \item (Optional) Few-shot examples demonstrating specific techniques like soft interpolation and reward scaling.
\end{itemize}

\vspace{3pt}\noindent \textbf{From next page}, we provide the complete prompt templates used for generating guidance functions.

\onecolumn

\subsection{Hand Door Task Prompt Example}
\begin{promptcode}[Hand Door Task Prompt Example]
You are an expert in robotics, diffusion model, reinforcement learning, and code generation.
We are going to use an Adroit Shadow Hand to complete given tasks. The action space of the robot is a normalized `Box(-1.0, 1.0, (28,), float32)`. 

Now I want you to help me write a guidance function for a diffusion-based planner. 
1. The guidance function is used to steer the sampling process toward desired outcomes during the reverse diffusion process. 
2. The guidance function should be differentiable, which computes a scalar reward indicating how well each intermediate trajectory aligns with the task objectives.

In manipulation tasks involving interaction with an object, such as opening a door, hammer striking, note that we cannot directly control the object's state. Thus, the guidance function should consider a two-phase approach:
Phase 1 (Pre-Interaction Phase): The guidance function should focus solely on guiding the hand's state to align with the object's handle or interaction point.
Phase 2 (Post-Interaction Phase): Once the hand is in contact with the object, the guidance function should aim to move the object towards achieving the task goal. During this phase, the guidance function typically include the following components (some part is optional, so only include them if really necessary):
1. difference between the current state of the object and its goal state
2. dynamics constraints to ensure the interactions between the hand and the object are physically plausible
3. regularization of the object's state change (e.g., limiting the hinge state change of a door to avoid abrupt movements).
4. [optional] extra constraint of the target object, which is often implied by the task instruction
5. [optional] extra constraint of the robot, which is often implied by the task instruction
...

/*\textbf{Environment Description:}*/
class BaseEnv(gym.Env):
    self.hand : AdroitHand     # The Adroit Shadow Hand used in the environment
    self.door : Door           # The Door object in the environment
    self.dt : float            # The time between two actions, in seconds

    def get_obs(self) -> np.ndarray[(30,)]:
        # Returns the observation vector
        obs = np.concatenate([
            self.hand.get_joint_positions(),                      # Indices 0-27
            [self.door.hinge.angle],                              # Index 28
            [self.door.latch.angle],                              # Index 29
            self.hand.palm.get_position()                         # Indices 30-32
            self.door.handle.get_position()                       # Indices 33-35
        ])
        return obs

class AdroitHand:
    self.arm : Arm             # The arm component of the hand
    self.wrist : Wrist         # The wrist component of the hand
    self.fingers : Fingers     # The fingers of the hand
    self.palm : Palm           # The palm of the hand

    def get_joint_positions(self) -> np.ndarray[(28,)]:
        # Returns the angular positions of all joints in the hand and arm
        return np.array([
            self.arm.translation_z.position,      # Index 0: ARTz
            self.arm.rotation_x.angle,            # Index 1: ARRx
            self.arm.rotation_y.angle,            # Index 2: ARRy
            self.arm.rotation_z.angle,            # Index 3: ARRz
            self.wrist.wrist_joint_1.angle,       # Index 4: WRJ1
            self.wrist.wrist_joint_0.angle,       # Index 5: WRJ0
            # Finger joints
            self.fingers.ffj3.angle,              # Index 6: FFJ3
            self.fingers.ffj2.angle,              # Index 7: FFJ2
            self.fingers.ffj1.angle,              # Index 8: FFJ1
            self.fingers.ffj0.angle,              # Index 9: FFJ0
            self.fingers.mfj3.angle,              # Index 10: MFJ3
            self.fingers.mfj2.angle,              # Index 11: MFJ2
            self.fingers.mfj1.angle,              # Index 12: MFJ1
            self.fingers.mfj0.angle,              # Index 13: MFJ0
            self.fingers.rfj3.angle,              # Index 14: RFJ3
            self.fingers.rfj2.angle,              # Index 15: RFJ2
            self.fingers.rfj1.angle,              # Index 16: RFJ1
            self.fingers.rfj0.angle,              # Index 17: RFJ0
            self.fingers.lfj4.angle,              # Index 18: LFJ4
            self.fingers.lfj3.angle,              # Index 19: LFJ3
            self.fingers.lfj2.angle,              # Index 20: LFJ2
            self.fingers.lfj1.angle,              # Index 21: LFJ1
            self.fingers.lfj0.angle,              # Index 22: LFJ0
            self.fingers.thj4.angle,              # Index 23: THJ4
            self.fingers.thj3.angle,              # Index 24: THJ3
            self.fingers.thj2.angle,              # Index 25: THJ2
            self.fingers.thj1.angle,              # Index 26: THJ1
            self.fingers.thj0.angle               # Index 27: THJ0
        ])

class Arm:
    self.translation_z : SlideJoint  # ARTz
    self.rotation_x : HingeJoint     # ARRx
    self.rotation_y : HingeJoint     # ARRy
    self.rotation_z : HingeJoint     # ARRz

class Wrist:
    self.wrist_joint_1 : HingeJoint  # WRJ1
    self.wrist_joint_0 : HingeJoint  # WRJ0

class Fingers:
    # Forefinger joints
    self.ffj3 : HingeJoint  # FFJ3
    self.ffj2 : HingeJoint  # FFJ2
    self.ffj1 : HingeJoint  # FFJ1
    self.ffj0 : HingeJoint  # FFJ0

    # Middle finger joints
    self.mfj3 : HingeJoint  # MFJ3
    self.mfj2 : HingeJoint  # MFJ2
    self.mfj1 : HingeJoint  # MFJ1
    self.mfj0 : HingeJoint  # MFJ0

    # Ring finger joints
    self.rfj3 : HingeJoint  # RFJ3
    self.rfj2 : HingeJoint  # RFJ2
    self.rfj1 : HingeJoint  # RFJ1
    self.rfj0 : HingeJoint  # RFJ0

    # Little finger joints
    self.lfj4 : HingeJoint  # LFJ4
    self.lfj3 : HingeJoint  # LFJ3
    self.lfj2 : HingeJoint  # LFJ2
    self.lfj1 : HingeJoint  # LFJ1
    self.lfj0 : HingeJoint  # LFJ0

    # Thumb joints
    self.thj4 : HingeJoint  # THJ4
    self.thj3 : HingeJoint  # THJ3
    self.thj2 : HingeJoint  # THJ2
    self.thj1 : HingeJoint  # THJ1
    self.thj0 : HingeJoint  # THJ0

class Palm:
    self.pose : ObjectPose         # The 3D position and orientation of the palm

    def get_position(self) -> np.ndarray[(3,)]:
        # Returns the position of the palm in world coordinates
        return self.pose.position

class Door:
    self.latch : HingeJoint        # The latch joint of the door
    self.hinge : HingeJoint        # The hinge joint of the door
    self.handle : Handle           # The handle of the door

class Handle:
    self.pose : ObjectPose         # The 3D position and orientation of the handle

    def get_position(self) -> np.ndarray[(3,)]:
        # Returns the position of the handle in world coordinates
        return self.pose.position

class HingeJoint:
    self.angle : float                 # Joint angle in radians
    self.angular_velocity : float      # Joint angular velocity in radians per second

class SlideJoint:
    self.position : float              # Position along the slide in meters
    self.velocity : float              # Velocity along the slide in meters per second

class ObjectPose:
    self.position : np.ndarray[(3,)]    # 3D position in world coordinates
    self.orientation : np.ndarray[(4,)] # Quaternion orientation (w, x, y, z)

Observation Index Mapping:
Index 0: Linear translation of the full arm towards the door (self.hand.arm.translation_z.position);
Index 1-27: Angular positions of the hand and arm joints (as per the joint order above);
Index 28: Angular position of the door hinge (self.door.hinge.angle);
Index 29: Angular position of the door latch (self.door.latch.angle);
Index 30-32: Position of the center of the palm in x, y, z (self.hand.palm.get_position());
Index 33-35: Position of the handle of the door in x, y, z (self.door.handle.get_position()).

/*\textbf{Additional knowledge:}*/
1. All angles are expressed in radians.
2. The input `normed_obs` is a tensor with shape (B, H, obs_dim), `normed_actions` is a tensor with shape (B, H, act_dim), where B is the batch size, H is the horizon length. The normed_obs is gotten from `normed_obs = get_obs()`.
3. If you need to match the observations or actions to some explicit value and if not without_normalizer, you should unnormalize them using `self.unnormalize(normed_obs, is_obs=True)`.
4. If `dyn_model` is provided, please call `self.cal_dyn_reward(state=normed_obs, action=normed_actions)` to calculates the reward for dynamics inconsistency (a scalar value) between generated states and actions. Only consider it in phase 2. Pay attention the input should be normed_obs and normed_actions before unnormalizing them.
5. Use L2 distance via `torch.norm(,p=2)` to calculate all the difference instead of mse loss or `torch.abs`.
6. The transition between Phase 1 and Phase 2 by using a grasp mask to determine if the hand has successfully grasped the object. Use a condition like `mask = torch.norm(palm_pos[:, 0, :] - handle_pos[:, 0, :], p=2, dim=1) < 0.1` to switch from guiding only the hand to guiding both the hand and the object.

You are allowed to use any existing Python package if applicable, but only use them when absolutely necessary. Please import the required packages at the beginning of the function. 

/*\textbf{I want it to fulfill the following task: \textcolor{red}{\{"Write a guidance function for a diffusion-based planner that helps the Adroit Shadow Hand open the door to 30 degrees (pi/6 radians)."\}}}*/
1. Please think step by step and explain what it means in the context of this environment;
2. Then write a differentiable guidance function that guides the planner to generate actions smoothly based on the current normed state and action, with the function prototype as `def guidance_fn(self, normed_obs, normed_actions, dyn_model=None, without_normalizer=False)`. The function should return the `reward` as a torch.Tensor of shape `(B,)`;
3. Make sure the guidance aligns with the two phases: In Phase 1, only calculate a pre-grasp reward to guide the hand closer to the object. In Phase 2, guide both the object toward the final task goal. Ensure object velocity constraints are applied to regulate object state changes.
4. All the reward including the goal achieving reward should be across all horizon steps. For some term, use `torch.mean()` to accumulate reward over the horizon. For terms where the last dimension is 1 (such as angles), we should use torch.squeeze to remove that dimension before calculating the norm at dimension 1, rather than dimension 2.
5. Use `self.scaling_factors` as an empty dictionary by default. If the scaling factor for any reward component does not exist, initialize it adaptively to make that first reward term in batch approximately 12 initially, except for the goal-achieving reward (make the reward 30) and the dynamics reward (make it 1.2).
6. Take care of variables' type, never use functions or variables not provided. Ensure that all operations are compatible with PyTorch tensors and the function is differentiable. Do not use any absolute value operation and inplace operations, e.g. `x += 1`, `x[0] = 1`, using `x = x + 1` instead. 
7. Pay attention to the physical meaning of each dimension in the observation and action data as explained in the environment description above.
8. When you writing code, you can also add some comments as your thought, like this:
```
# Here unnormalize the observations if a normalizer is provided
# Here use `torch.norm` to compute the L2 distance between the current and target angles for the door hinge
# Here cauculate the grasp mask for the pre-interaction phase
```

/*\textbf{Few-shot hint:}*/
1. Ensure that the guidance function uses soft interpolation for targets, e.g., smoothly guiding the door hinge angle towards soft goals over the trajectory horizon like `interpolated_angle = (1 - alpha) * current_angle + alpha * target_angle`. 
\end{promptcode}

\subsection{Hand Pen Task Prompt Example}
\begin{promptcode}[Hand Pen Task Prompt Example]
You are an expert in robotics, diffusion model, reinforcement learning, and code generation.
We are going to use an Adroit Shadow Hand to complete given tasks. The action space of the robot is a normalized `Box(-1.0, 1.0, (28,), float32)`. 

Now I want you to help me write a guidance function for a diffusion-based planner. 
1. The guidance function is used to steer the sampling process toward desired outcomes during the reverse diffusion process. 
2. The guidance function should be differentiable, which computes a scalar reward indicating how well each intermediate trajectory aligns with the task objectives.

In manipulation tasks involving interaction with an object, such as rotating a pen, note that we cannot directly control the object's state. Thus, the guidance function should consider a two-phase approach:
[optional] Phase 1 (Pre-Interaction Phase): The guidance function should focus solely on guiding the hand's state to align with the object's handle or interaction point.
Phase 2 (Post-Interaction Phase): Once the hand is in contact with the object, the guidance function should aim to move the object towards achieving the task goal. During this phase, the guidance function typically include the following components (some part is optional, so only include them if really necessary):
1. difference between the current state of the object and its goal state
2. dynamics constraints to ensure the interactions between the hand and the object are physically plausible
3. regularization of the object's state change (e.g., encourage the hand joint movement to enhance interaction with the object).
4. [optional] extra constraint of the target object, which is often implied by the task instruction
5. [optional] extra constraint of the robot, which is often implied by the task instruction
...

/*\textbf{Environment Description:}*/
class BaseEnv(gym.Env):
    self.hand : AdroitHand     # The Adroit Shadow Hand used in the environment
    self.pen : Pen             # The Pen object in the environment
    self.target : Target       # The target orientation for the pen
    self.dt : float            # The time between two actions, in seconds

    def get_obs(self) -> np.ndarray[(36,)]:
        # Returns the observation vector
        obs = np.concatenate([
            self.hand.get_joint_positions(),                      # Indices 0-23
            self.pen.get_qpos()                                   # Indices 24-29
            self.pen.get_relative_rotation(),                     # Indices 30-32
            self.target.get_relative_rotation(),                  # Indices 33-35
        ])
        return obs

class AdroitHand:
    self.wrist : Wrist         # The wrist component of the hand
    self.fingers : Fingers     # The fingers of the hand
    self.palm : Palm           # The palm of the hand

    def get_joint_positions(self) -> np.ndarray[(24,)]:
        # Returns the angular positions of all joints in the hand
        return np.array([
            self.wrist.wrist_joint_1.angle,       # Index 0: WRJ1
            self.wrist.wrist_joint_0.angle,       # Index 1: WRJ0
            # Finger joints
            self.fingers.ffj3.angle,              # Index 2: FFJ3
            self.fingers.ffj2.angle,              # Index 3: FFJ2
            self.fingers.ffj1.angle,              # Index 4: FFJ1
            self.fingers.ffj0.angle,              # Index 5: FFJ0
            self.fingers.mfj3.angle,              # Index 6: MFJ3
            self.fingers.mfj2.angle,              # Index 7: MFJ2
            self.fingers.mfj1.angle,              # Index 8: MFJ1
            self.fingers.mfj0.angle,              # Index 9: MFJ0
            self.fingers.rfj3.angle,              # Index 10: RFJ3
            self.fingers.rfj2.angle,              # Index 11: RFJ2
            self.fingers.rfj1.angle,              # Index 12: RFJ1
            self.fingers.rfj0.angle,              # Index 13: RFJ0
            self.fingers.lfj4.angle,              # Index 14: LFJ4
            self.fingers.lfj3.angle,              # Index 15: LFJ3
            self.fingers.lfj2.angle,              # Index 16: LFJ2
            self.fingers.lfj1.angle,              # Index 17: LFJ1
            self.fingers.lfj0.angle,              # Index 18: LFJ0
            self.fingers.thj4.angle,              # Index 19: THJ4
            self.fingers.thj3.angle,              # Index 20: THJ3
            self.fingers.thj2.angle,              # Index 21: THJ2
            self.fingers.thj1.angle,              # Index 22: THJ1
            self.fingers.thj0.angle               # Index 23: THJ0
        ])

class Wrist:
    self.wrist_joint_1 : HingeJoint  # WRJ1
    self.wrist_joint_0 : HingeJoint  # WRJ0

class Fingers:
    # Forefinger joints
    self.ffj3 : HingeJoint  # FFJ3
    self.ffj2 : HingeJoint  # FFJ2
    self.ffj1 : HingeJoint  # FFJ1
    self.ffj0 : HingeJoint  # FFJ0

    # Middle finger joints
    self.mfj3 : HingeJoint  # MFJ3
    self.mfj2 : HingeJoint  # MFJ2
    self.mfj1 : HingeJoint  # MFJ1
    self.mfj0 : HingeJoint  # MFJ0

    # Ring finger joints
    self.rfj3 : HingeJoint  # RFJ3
    self.rfj2 : HingeJoint  # RFJ2
    self.rfj1 : HingeJoint  # RFJ1
    self.rfj0 : HingeJoint  # RFJ0

    # Little finger joints
    self.lfj4 : HingeJoint  # LFJ4
    self.lfj3 : HingeJoint  # LFJ3
    self.lfj2 : HingeJoint  # LFJ2
    self.lfj1 : HingeJoint  # LFJ1
    self.lfj0 : HingeJoint  # LFJ0

    # Thumb joints
    self.thj4 : HingeJoint  # THJ4
    self.thj3 : HingeJoint  # THJ3
    self.thj2 : HingeJoint  # THJ2
    self.thj1 : HingeJoint  # THJ1
    self.thj0 : HingeJoint  # THJ0

class Palm:
    self.pose : ObjectPose         # The 3D position and orientation of the palm

    def get_position(self) -> np.ndarray[(3,)]:
        # Returns the position of the palm in world coordinates
        return self.pose.position

class Pen:
    self.pose : ObjectPose         # The 3D position and orientation of the pen
    self.qpos : np.ndarray[(6,)]   # The qpos values of the pen's joints

    def get_position(self) -> np.ndarray[(3,)]:
        # Returns the position of the pen in world coordinates
        return self.pose.position

    def get_relative_rotation(self) -> np.ndarray[(3,)]:
        # Returns the relative rotation of the pen
        return self.pose.orientation

    def get_position_to_target(self, target: Target) -> np.ndarray[(3,)]:
        # Returns the position vector from the pen to the target
        return target.pose.position - self.pose.position

    def get_rotation_to_target(self, target: Target) -> np.ndarray[(3,)]:
        # Returns the rotation vector from the pen to the target
        return target.pose.orientation - self.pose.orientation
        
    def get_qpos(self) -> np.ndarray[(6,)]:
        # Returns the qpos values of the pen's joints
        return self.qpos

class Target:
    self.pose : ObjectPose         # The 3D position

/*\textbf{Additional knowledge:}*/
1. All angles are expressed in radians.
2. The input `normed_obs` is a tensor with shape (B, H, obs_dim), `normed_actions` is a tensor with shape (B, H, act_dim), where B is the batch size, H is the horizon length. The normed_obs is gotten from `normed_obs = get_obs()`.
3. If you need to match the observations or actions to some explicit value and if not without_normalizer, you should unnormalize them using `self.unnormalize(normed_obs, is_obs=True)`.
4. If `dyn_model` is provided, please call `self.cal_dyn_reward(state=normed_obs, action=normed_actions)` to calculates the reward for dynamics inconsistency (a scalar value) between generated states and actions. Only consider it in phase 2. Pay attention the input should be normed_obs and normed_actions before unnormalizing them.
5. Use L2 distance via `torch.norm(,p=2)` to calculate all the difference instead of mse loss or `torch.abs`. For terms where the last dimension is 1 (such as angles), we should use torch.squeeze to remove that dimension before calculating the norm at dimension 1, rather than dimension 2.

You are allowed to use any existing Python package if applicable, but only use them when absolutely necessary. Please import the required packages at the beginning of the function. 

/*\textbf{I want it to fulfill the following task: \textcolor{red}{\{"Write a guidance function for a diffusion-based planner that helps the Adroit Shadow Hand rotate the pen to the desired target orientation."\}}}*/
1. Please think step by step and explain what it means in the context of this environment;
2. Then write a differentiable guidance function that guides the planner to generate actions smoothly based on the current normed state and action, with the function prototype as `def guidance_fn(self, normed_obs, normed_actions, dyn_model=None, without_normalizer=False, desired_pen=None)`. The function should return the `reward` as a torch.Tensor of shape `(B,)`;
3. All the reward including the goal achieving reward should be across all horizon steps. For some term, use `torch.mean()` to accumulate reward over the horizon.
4. Use input `desired_pen` as the target rotation, but you should reshape it by `target_rotation = desired_pen[..., -3:].reshape(batch_size, 1, 3).repeat(1, horizon, 1)`. You should first normalize the direction vector and then use inner product to calculate the similarity between two orientations.
5. Don't directly use actions to penalize the reward, but you can use the difference between the current and previous hand joint states to penalize the reward. You encourage the hand joint movement to enhance interaction with the object.
6. Use `self.scaling_factors` as an empty dictionary by default. If the scaling factor for any reward component does not exist, initialize it adaptively to make that first reward term in batch approximately 1 initially, except for the the dynamics reward (make it 2.). 
7. Take care of variables' type, never use functions or variables not provided. Ensure that all operations are compatible with PyTorch tensors and the function is differentiable. Do not use any absolute value operation and inplace operations, e.g. `x += 1`, `x[0] = 1`, using `x = x + 1` instead. 
8. Pay attention to the physical meaning of each dimension in the observation and action data as explained in the environment description above.
9. When you writing code, you can also add some comments as your thought, like this:
```
# Here unnormalize the observations if a normalizer is provided
# Here use `torch.norm` to compute the L2 distance between the current and target angles for the door hinge
```

/*\textbf{Few-shot hint:}*/
1. Ensure that the guidance function uses soft interpolation for targets, e.g., smoothly guiding the pen orientation towards soft goals over the trajectory horizon like `interpolated_angle = (1 - alpha) * current_obj_orien + alpha * desired_orien`. If use soft goals, don't calculate another hard goal reward.
2. No smoothness reward for the pen movement. Only consider the smoothness of the hand joint movement.
\end{promptcode}

\vspace{3pt}
\subsection{Hand Hammer Task Prompt Example}
\begin{promptcode}
You are an expert in robotics, diffusion model, reinforcement learning, and code generation.
We are going to use an Adroit Shadow Hand to complete given tasks. The action space of the robot is a normalized `Box(-1.0, 1.0, (28,), float32)`. 

Now I want you to help me write a guidance function for a diffusion-based planner. 
1. The guidance function is used to steer the sampling process toward desired outcomes during the reverse diffusion process. 
2. The guidance function should be differentiable, which computes a scalar reward indicating how well each intermediate trajectory aligns with the task objectives.

In manipulation tasks involving interaction with an object, such as opening a door, hammer striking, note that we cannot directly control the object's state. Thus, the guidance function should consider a two-phase approach:
Phase 1 (Pre-Interaction Phase): The guidance function should focus solely on guiding the hand's state to align with the object's handle or interaction point.
Phase 2 (Post-Interaction Phase): Once the hand is in contact with the object, the guidance function should aim to move the object towards achieving the task goal. During this phase, the guidance function typically include the following components (some part is optional, so only include them if really necessary):
1. difference between the current state of the object and its goal state
2. dynamics constraints to ensure the interactions between the hand and the object are physically plausible
3. regularization of the object's state change (e.g., limiting the hinge state change of a door to avoid abrupt movements).
4. [optional] extra constraint of the target object, which is often implied by the task instruction
5. [optional] extra constraint of the robot, which is often implied by the task instruction
...

/*\textbf{Environment Description:}*/
class BaseEnv(gym.Env):
    self.hand : AdroitHand     # The Adroit Shadow Hand used in the environment
    self.hammer : Hammer       # The Hammer object in the environment
    self.nail : Nail          # The Nail object in the environment
    self.dt : float           # The time between two actions, in seconds

    def get_obs(self) -> np.ndarray[(46,)]:
        # Returns the observation vector
        obs = np.concatenate([
            self.hand.get_joint_positions(),                      # Indices 0-25
            [self.nail.insertion_displacement],                   # Index 26
            self.hammer.get_qpos(),                               # Indices 27-32
            self.hand.palm.get_position(),                        # Indices 33-35
            self.hammer.get_position(),                           # Indices 36-38
            self.hammer.get_orientation(),                        # Indices 39-41
            self.nail.get_position(),                             # Indices 42-44
            [self.nail.force]                                     # Index 45
        ])
        return obs

class AdroitHand:
    self.arm : Arm             # The arm component of the hand
    self.wrist : Wrist         # The wrist component of the hand
    self.fingers : Fingers     # The fingers of the hand
    self.palm : Palm           # The palm of the hand

    def get_joint_positions(self) -> np.ndarray[(26,)]:
        # Returns the angular positions of all joints in the hand and arm
        return np.array([
            self.arm.rotation_x.angle,            # Index 0: ARRx
            self.arm.rotation_y.angle,            # Index 1: ARRy
            self.wrist.wrist_joint_1.angle,       # Index 2: WRJ1
            self.wrist.wrist_joint_0.angle,       # Index 3: WRJ0
            # Finger joints
            self.fingers.ffj3.angle,              # Index 4: FFJ3
            self.fingers.ffj2.angle,              # Index 5: FFJ2
            self.fingers.ffj1.angle,              # Index 6: FFJ1
            self.fingers.ffj0.angle,              # Index 7: FFJ0
            self.fingers.mfj3.angle,              # Index 8: MFJ3
            self.fingers.mfj2.angle,              # Index 9: MFJ2
            self.fingers.mfj1.angle,              # Index 10: MFJ1
            self.fingers.mfj0.angle,              # Index 11: MFJ0
            self.fingers.rfj3.angle,              # Index 12: RFJ3
            self.fingers.rfj2.angle,              # Index 13: RFJ2
            self.fingers.rfj1.angle,              # Index 14: RFJ1
            self.fingers.rfj0.angle,              # Index 15: RFJ0
            self.fingers.lfj4.angle,              # Index 16: LFJ4
            self.fingers.lfj3.angle,              # Index 17: LFJ3
            self.fingers.lfj2.angle,              # Index 18: LFJ2
            self.fingers.lfj1.angle,              # Index 19: LFJ1
            self.fingers.lfj0.angle,              # Index 20: LFJ0
            self.fingers.thj4.angle,              # Index 21: THJ4
            self.fingers.thj3.angle,              # Index 22: THJ3
            self.fingers.thj2.angle,              # Index 23: THJ2
            self.fingers.thj1.angle,              # Index 24: THJ1
            self.fingers.thj0.angle               # Index 25: THJ0
        ])

class Hammer:
    self.pose : ObjectPose         # The 3D position and orientation of the hammer
    self.velocity : ObjectVelocity # Linear and angular velocities of the hammer
    self.OBJTx : SlideJoint        # The slide joint along the x-axis
    self.OBJTy : SlideJoint        # The slide joint along the y-axis
    self.OBJTz : SlideJoint        # The slide joint along the z-axis
    self.OBJRx : RevoluteJoint     # The revolute joint around the x-axis
    self.OBJRy : RevoluteJoint     # The revolute joint around the y-axis
    self.OBJRz : RevoluteJoint     # The revolute joint around the z-axis

    def get_position(self) -> np.ndarray[(3,)]:
        # Returns the position of the hammer's center of mass in world coordinates
        return self.pose.position

    def get_orientation(self) -> np.ndarray[(3,)]:
        # Returns the relative rotation of the hammer with respect to x,y,z axes
        return self.pose.get_euler_angles()

    def get_qpos(self) -> np.ndarray[(6,)]:
        # Returns the joint positions of the hammer
        return np.array([self.OBJTx.position, self.OBJTy.position, self.OBJTz.position,
                            self.OBJRx.angle, self.OBJRy.angle, self.OBJRz.angle])

class Nail:
    self.pose : ObjectPose            # The 3D position of the nail
    self.insertion_displacement : float # Current insertion depth of the nail
    self.force : float                # Linear force exerted on the nail head

    def get_position(self) -> np.ndarray[(3,)]:
        # Returns the position of the nail in world coordinates
        return self.pose.position

class ObjectVelocity:
    self.linear : np.ndarray[(3,)]    # Linear velocity in x,y,z
    self.angular : np.ndarray[(3,)]   # Angular velocity around x,y,z axes

class ObjectPose:
    self.position : np.ndarray[(3,)]    # 3D position in world coordinates
    self.orientation : np.ndarray[(4,)]  # Quaternion orientation (w, x, y, z)

    def get_euler_angles(self) -> np.ndarray[(3,)]:
        # Returns the orientation as Euler angles (roll, pitch, yaw)
        return quaternion_to_euler(self.orientation)

Observation Index Mapping:
Index 0-25: Angular positions of the hand joints (in radians);
Index 26: Insertion displacement of nail (in meters) range from -0.01 to 0.09;
Index 27-32: Qpos of the hammer joints (in meters and radians);
Index 33-35: Position of the center of the palm in x,y,z (in meters);
Index 36-38: Position of the hammer's center of mass in x,y,z (in meters);
Index 39-41: Relative rotation of hammer's center of mass w.r.t x,y,z axes (in radians);
Index 42-44: Position of the nail in x,y,z (in meters);
Index 45: Linear force exerted on the head of the nail (in Newtons) range from -1.0 to 1.0.

/*\textbf{Additional knowledge:}*/
1. All angles are expressed in radians.
2. The input `normed_obs` is a tensor with shape (B, H, obs_dim), `normed_actions` is a tensor with shape (B, H, act_dim), where B is the batch size, H is the horizon length. The normed_obs is gotten from `normed_obs = get_obs()`.
3. If you need to match the observations or actions to some explicit value and if not without_normalizer, you should unnormalize them using `self.unnormalize(normed_obs, is_obs=True)`.
4. If `dyn_model` is provided, please call `self.cal_dyn_reward(state=normed_obs, action=normed_actions)` to calculates the reward for dynamics inconsistency (a scalar value) between generated states and actions. Only consider it in phase 2. Pay attention the input should be normed_obs and normed_actions before unnormalizing them.
5. Use L2 distance via `torch.norm(,p=2)` to calculate all the difference instead of mse loss or `torch.abs`.
6. The transition between Phase 1 and Phase 2 by using a grasp mask to determine if the hand has successfully grasped the object. Use a condition like `mask = torch.norm(palm_pos[:, 0, :] - handle_pos[:, 0, :], p=2, dim=1) < 0.1` to switch from guiding only the hand to guiding both the hand and the object.

You are allowed to use any existing Python package if applicable, but only use them when absolutely necessary. Please import the required packages at the beginning of the function.

/*\textbf{I want it to fulfill the following task: \textcolor{red}{\{"Write a guidance function for a diffusion-based planner that helps the Adroit Shadow Hand grasp the hammer and only drive half nail into the board."\}}}*/
1. Please think step by step and explain what it means in the context of this environment;
2. Then write a differentiable guidance function that guides the planner to generate actions smoothly based on the current normed state and action, with the function prototype as `def guidance_fn(self, normed_obs, normed_actions, dyn_model=None, without_normalizer=False)`. The function should return the `reward` as a torch.Tensor of shape `(B,)`;
3. Make sure the guidance aligns with the two phases: In Phase 1, only calculate a pre-grasp reward to guide the hand closer to the object. In Phase 2, guide both the object toward the final task goal. Ensure object velocity constraints are applied to regulate object state changes.
4. All the reward including the goal achieving reward should be across all horizon steps. For some term, use `torch.mean()` to accumulate reward over the horizon. For terms where the last dimension is 1 (such as angles), we should use torch.squeeze to remove that dimension before calculating the norm at dimension 1, rather than dimension 2.
5. Take care of variables' type, never use functions or variables not provided. Ensure that all operations are compatible with PyTorch tensors and the function is differentiable. Do not use any absolute value operation and inplace operations, e.g. `x += 1`, `x[0] = 1`, using `x = x + 1` instead. 
6. Pay attention to the physical meaning of each dimension in the observation and action data as explained in the environment description above.
7. When you writing code, you can also add some comments as your thought, like this:
```
# Here unnormalize the observations if a normalizer is provided
# Here use `torch.norm` to compute the L2 distance between the current and target angles for the door hinge
# Here cauculate the grasp mask for the pre-interaction phase
```

/*\textbf{Few-shot hint:}*/
1. Use nail insertion displacement (range from -0.01 No Drive to 0.09 Full Drive) as a measure of nail insertion degree.
2. Constraint both the hammer qpos changes and the hammer position & rotation changes (pos+rot as 1 item) to avoid abrupt movements.
3. Use `self.scaling_factors` as an empty dictionary by default. If the scaling factor for any reward component does not exist, initialize it adaptively to make that first reward term in batch approximately 6 initially, except for the hammer qpos change constraint reward (make the reward 12) and the dynamics reward (make it 0.3).
\end{promptcode}

Then we can replace the task instruction marked by red with specific requirements according to different tasks.
This structured prompting approach enables LLMs to generate physically consistent and task-appropriate guidance functions for goal-adaptive dexterous manipulation.

\section{Samples of Generated Guidance Function}
\subsection{Sample of Guidance Function on Hand Door Task}
\begin{pythoncode}
def guidance_fn(self, normed_obs, normed_actions, dyn_model=None, without_normalizer=False):
    import math
    """
    Guidance function for door opening task with the Adroit hand.
    Args:
        normed_obs: Normalized observations tensor of shape (B, H, obs_dim)
        normed_actions: Normalized actions tensor of shape (B, H, act_dim)
        dyn_model: Optional dynamics model for consistency checking
        without_normalizer: Boolean indicating if normalization should be skipped
    Returns:
        reward: Tensor of shape (B,) containing the guidance rewards
    """
    batch_size, horizon, _ = normed_obs.shape

    # Get the unnormalized observations if normalizer is provided
    obs = normed_obs if without_normalizer else self.unnormalize(normed_obs, is_obs=True)

    # Extract relevant state information
    palm_pos = obs[:, :, 30:33]  # Palm position
    handle_pos = obs[:, :, 33:36]  # Door handle position
    door_hinge_angle = obs[:, :, 28:29]  # Door hinge angle

    # Define target door angle (30 degrees = math.pi/6 radians)
    target_door_angle = torch.full_like(door_hinge_angle, math.pi / 6)

    # Calculate grasp mask for phase transition
    # Use distance between palm and handle to determine if hand is close enough
    grasp_threshold = 0.1
    grasp_dist = torch.norm(palm_pos - handle_pos, p=2, dim=2)
    grasp_mask = (grasp_dist < grasp_threshold).float()

    # Phase 1: Pre-interaction rewards
    # Calculate reaching reward - guide hand to door handle
    reaching_dist = torch.norm(palm_pos - handle_pos, p=2, dim=2)
    reaching_reward = -torch.mean(reaching_dist, dim=1)  # Average over horizon

    # Scale reaching reward if not already scaled
    if 'reaching' not in self.scaling_factors:
        # Initialize scaling to make initial reward approximately 12
        with torch.no_grad():
            initial_reaching = -reaching_reward[0].item()
            self.scaling_factors['reaching'] = 12.0 / (initial_reaching + 1e-8)
    reaching_reward = reaching_reward * self.scaling_factors['reaching']

    # Phase 2: Post-interaction rewards
    # Calculate door angle reward
    door_angle_diff = torch.norm(door_hinge_angle - target_door_angle, p=2, dim=2)
    door_reward = -torch.mean(door_angle_diff, dim=1)  # Average over horizon

    # Scale door reward if not already scaled
    if 'door' not in self.scaling_factors:
        # Initialize scaling to make initial reward approximately 30
        with torch.no_grad():
            initial_door = -door_reward[0].item()
            self.scaling_factors['door'] = 30.0 / (initial_door + 1e-8)
    door_reward = door_reward * self.scaling_factors['door']

    # Calculate velocity constraint reward to ensure smooth door movement
    door_velocity = (door_hinge_angle[:, 1:, 0] - door_hinge_angle[:, :-1, 0]) / self.dt
    velocity_reward = -torch.norm(door_velocity, p=2, dim=1)

    # Scale velocity reward if not already scaled
    if 'velocity' not in self.scaling_factors:
        with torch.no_grad():
            initial_velocity = -velocity_reward[0].item()
            self.scaling_factors['velocity'] = 12.0 / (initial_velocity + 1e-8)
    velocity_reward = velocity_reward * self.scaling_factors['velocity']

    # Add dynamics consistency reward if model is provided
    dyn_reward = torch.zeros_like(reaching_reward)
    if dyn_model is not None:
        dyn_reward = self.cal_dyn_reward(state=normed_obs, action=normed_actions)
        # Scale dynamics reward if not already scaled
        if 'dynamics' not in self.scaling_factors:
            with torch.no_grad():
                initial_dyn = dyn_reward[0].item()
                self.scaling_factors['dynamics'] = 1.2 / (initial_dyn + 1e-8)
        dyn_reward = dyn_reward * self.scaling_factors['dynamics']

    # Combine rewards using the grasp mask
    # Pre-interaction phase: only reaching reward
    # Post-interaction phase: door reward + velocity reward + dynamics reward
    total_reward = (1 - grasp_mask[:, 0]) * reaching_reward + \
                   grasp_mask[:, 0] * (door_reward + velocity_reward + dyn_reward)

    return total_reward
\end{pythoncode}

\vspace{3pt}
\subsection{Sample of Guidance Function on Hand Pen Task}
\begin{pythoncode}
def guidance_fn(self, normed_obs, normed_actions, dyn_model=None, without_normalizer=False, desired_pen=None):
    # Get batch size and horizon length
    batch_size, horizon, obs_dim = normed_obs.shape
    
    # Process target rotation
    target_rotation = desired_pen[..., -3:].reshape(batch_size, 1, 3).repeat(1, horizon, 1)
    
    # Initialize reward components
    reward = torch.zeros(batch_size, device=normed_obs.device)
    
    # Unnormalize observations if normalizer is provided
    if not without_normalizer:
        obs = self.unnormalize(normed_obs, is_obs=True)
    else:
        obs = normed_obs

    # Extract relevant states
    hand_joints = obs[..., :24]  # Hand joint positions
    pen_rotation = obs[..., 30:33]  # Pen rotation
    
    # 1. Orientation alignment reward using soft interpolation
    # Normalize vectors before computing similarity
    pen_rotation_norm = pen_rotation / (torch.norm(pen_rotation, p=2, dim=-1, keepdim=True) + 1e-6)
    target_rotation_norm = target_rotation / (torch.norm(target_rotation, p=2, dim=-1, keepdim=True) + 1e-6)
    
    # Compute similarity using dot product (higher means more aligned)
    orientation_similarity = torch.sum(pen_rotation_norm * target_rotation_norm, dim=-1)
    orientation_reward = torch.mean(orientation_similarity, dim=1)
    
    # Initialize scaling factor for orientation reward if not exists
    if 'orientation' not in self.scaling_factors:
        self.scaling_factors['orientation'] = 1.0 / (orientation_reward[0].abs().item() + 1e-6)
    
    reward = reward + self.scaling_factors['orientation'] * orientation_reward

    # 2. Hand joint movement smoothness reward
    # Calculate joint position differences between consecutive timesteps
    joint_diffs = hand_joints[:, 1:] - hand_joints[:, :-1]
    smoothness_penalty = torch.norm(joint_diffs, p=2, dim=-1)  # Shape: (batch_size, horizon-1)
    smoothness_reward = -torch.mean(smoothness_penalty, dim=1)  # Average over horizon
    
    # Initialize scaling factor for smoothness reward if not exists
    if 'smoothness' not in self.scaling_factors:
        self.scaling_factors['smoothness'] = 1.0 / (smoothness_reward[0].abs().item() + 1e-6)
    
    reward = reward + self.scaling_factors['smoothness'] * smoothness_reward

    # 3. Dynamic consistency reward (if model provided)
    if dyn_model is not None:
        dyn_reward = self.cal_dyn_reward(state=normed_obs, action=normed_actions)
        
        # Initialize scaling factor for dynamics reward if not exists
        if 'dynamics' not in self.scaling_factors:
            self.scaling_factors['dynamics'] = 2.0 / (dyn_reward[0].abs().item() + 1e-6)
        
        reward = reward + self.scaling_factors['dynamics'] * dyn_reward

    return reward
\end{pythoncode}

\vspace{3pt}
\subsection{Sample of Guidance Function on Hand Hammer Task}
\begin{pythoncode}
def guidance_fn(self, normed_obs, normed_actions, dyn_model=None, without_normalizer=False, tool_pos=None):
    """
    Guidance function for hammer-nail task with Adroit hand.
    Args:
        normed_obs: Normalized observations, shape (B, H, obs_dim)
        normed_actions: Normalized actions, shape (B, H, act_dim)
        dyn_model: Optional dynamics model for consistency checking
        without_normalizer: Boolean indicating if normalization should be skipped
    Returns:
        reward: Total reward tensor of shape (B,)
    """
    batch_size = normed_obs.shape[0]
    horizon_len = normed_obs.shape[1]
    device = normed_obs.device

    # Get unnormalized observations if normalizer is provided
    obs = normed_obs if without_normalizer else self.unnormalize(normed_obs, is_obs=True)

    # Extract relevant observations across all timesteps
    palm_pos = obs[:, :, 33:36]  # Hand palm position
    hammer_pos = obs[:, :, 36:39]  # Hammer position
    nail_pos = obs[:, :, 42:45]  # Nail position
    nail_insertion = obs[:, :, 26]  # Nail insertion depth, keep dim for proper broadcasting
    tool_pos = tool_pos[:, None, :].repeat(1, horizon_len, 1)

    # Calculate grasp mask based on distance between palm and hammer
    # Use first timestep to determine if hand has grasped hammer
    grasp_threshold = 0.1
    grasp_mask = torch.norm(palm_pos[:, 0, :] - hammer_pos[:, 0, :], p=2, dim=1) < grasp_threshold

    # Initialize total reward
    total_reward = torch.zeros(batch_size, device=device)

    # Phase 1: Pre-interaction guidance (hand approaching hammer)
    pre_grasp_reward = -torch.mean(
        torch.norm(palm_pos - hammer_pos, p=2, dim=2),
        dim=1
    )

    # Adaptive scaling for pre-grasp reward
    if 'pre_grasp' not in self.scaling_factors:
        self.scaling_factors['pre_grasp'] = 6.0 / (torch.abs(pre_grasp_reward[0]) + 1e-6)

    total_reward = total_reward + self.scaling_factors['pre_grasp'] * pre_grasp_reward

    # Phase 2: Post-interaction guidance (hammer control and nail insertion)
    # Only apply if hand has grasped hammer
    if torch.any(grasp_mask):
        contact_mask = torch.norm(tool_pos - nail_pos, p=2, dim=2) < 0.1
        # Target nail insertion (halfway = 0.04m)
        target_insertion = 0.04 * torch.ones_like(nail_insertion)
        insertion_reward = \
            -torch.norm(nail_insertion - target_insertion, p=2, dim=1) #* contact_mask[:, 0]

        # Adaptive scaling for insertion reward
        if 'insertion' not in self.scaling_factors:
            self.scaling_factors['insertion'] = 6.0 / (torch.abs(insertion_reward[0]) + 1e-6)

        # Constraint on hammer position changes (smooth movement)
        hammer_joint_pos_changes = torch.norm(
            obs[:, 1:, 27:33] - obs[:, :-1, 27:33],
            p=2, dim=2
        )
        hammer_joint_reward = -torch.mean(hammer_joint_pos_changes, dim=1)

        # Adaptive scaling for nail movement constraint
        if 'hammer_joint' not in self.scaling_factors:
            self.scaling_factors['hammer_joint'] = 6.0 / (torch.abs(hammer_joint_reward[0]) + 1e-6)

        # Constraint on hammer position changes (smooth movement)
        hammer_pos_changes = torch.norm(
            hammer_pos[:, 1:, :] - hammer_pos[:, :-1, :],
            p=2, dim=2
        )
        hammer_movement_reward = -torch.mean(hammer_pos_changes, dim=1)

        # Adaptive scaling for hammer movement constraint
        if 'hammer_movement' not in self.scaling_factors:
            self.scaling_factors['hammer_movement'] = 12.0 / (torch.abs(hammer_movement_reward[0]) + 1e-6)  # 100.

        # Add dynamics consistency reward if model provided
        if dyn_model is not None:
            dyn_reward = -self.cal_dyn_reward(state=normed_obs, action=normed_actions)

            # Adaptive scaling for dynamics reward
            if 'dynamics' not in self.scaling_factors:
                self.scaling_factors['dynamics'] = 0.3 / (torch.abs(dyn_reward[0]) + 1e-6)

            # Apply dynamics reward only to grasped trajectories
            total_reward = total_reward + self.scaling_factors['dynamics'] * dyn_reward * grasp_mask.float()

        # Add all Phase 2 rewards
        phase2_reward = (self.scaling_factors['insertion'] * insertion_reward +
                        self.scaling_factors['hammer_joint'] * hammer_joint_reward +
                         self.scaling_factors['hammer_movement'] * hammer_movement_reward)

        # Apply Phase 2 rewards only to grasped trajectories
        total_reward = total_reward + phase2_reward * grasp_mask.float()

    return total_reward
\end{pythoncode}

\twocolumn

\end{document}